\newtheorem{conj}{Conjecture}
\newtheorem{theorem}{Theorem}
\newtheorem*{theorem*}{Theorem}
\newtheorem{remark}[conj]{Remark}
\newtheorem{defn}[conj]{Definition}
\newtheorem{coro}{Corollary}
\newcommand{\f}[1]{\mathbf{#1}}
\newcommand{\bb}[1]{\mathbb{#1}}
\newcommand{\ca}[1]{\mathcal{#1}}
\newcommand{\s}[1]{\mathsf{#1}}
\newcommand{\Su}{\mathsf{occ}}
\newcommand{\Tr}{g}
\newcommand{\MLC}{\textsc{MLC}}
\newcommand{\MLR}{\textsc{MLR}}
\newcommand{\remove}[1]{}
\newcommand\nc\newcommand
\renewcommand{\cal}[1]{\mathcal{#1}}
\DeclarePairedDelimiter{\br}{\lparen}{\rparen}
\nc\bfa{{\bf{a}}}\nc\bfA{{\boldsymbol A}}\nc\cA{{\cal A}} \nc\fA[1]{A\br*{#1}} \nc\fa[1]{a\br*{#1}}  \nc\rmA{\mathrm{A}} \nc\rma{\mathrm{a}}
\nc\bfb{{\bf{b}}}\nc\bfB{{\boldsymbol B}}\nc\cB{{\cal B}} \nc\fB[1]{B\br*{#1}} \nc\fb[1]{b\br*{#1}}  \nc\rmB{\mathrm{B}} \nc\rmb{\mathrm{b}}
\nc\bfc{{\bf{c}}}\nc\bfC{{\boldsymbol C}}\nc\cC{{\cal C}} \nc\fC[1]{C\br*{#1}} \nc\fc[1]{c\br*{#1}}  \nc\rmC{\mathrm{C}} \nc\rmc{\mathrm{c}}
\nc\bfd{{\bf{d}}}\nc\bfD{{\boldsymbol D}}\nc\cD{{\cal D}} \nc\fD[1]{D\br*{#1}} \nc\fd[1]{d\br*{#1}}  \nc\rmD{\mathrm{D}} \nc\rmd{\mathrm{d}}
\nc\bfe{{\bf{e}}}\nc\bfE{{\boldsymbol E}}\nc\cE{{\cal E}} \nc\fE[1]{E\br*{#1}} \nc\fe[1]{e\br*{#1}}  \nc\rmE{\mathrm{E}} \nc\rme{\mathrm{e}}
\nc\bff{{\bf{f}}}\nc\bfF{{\boldsymbol F}}\nc\cF{{\cal F}} \nc\fF[1]{F\br*{#1}} \nc\ff[1]{f\br*{#1}}  \nc\rmF{\mathrm{F}} \nc\rmf{\mathrm{f}}
\nc\bfg{{\bf{g}}}\nc\bfG{{\boldsymbol G}}\nc\cG{{\cal G}} \nc\fG[1]{G\br*{#1}} \nc\fg[1]{g\br*{#1}}  \nc\rmG{\mathrm{G}} \nc\rmg{\mathrm{g}}
\nc\bfh{{\bf{h}}}\nc\bfH{{\boldsymbol H}}\nc\cH{{\cal H}} \nc\fH[1]{H\br*{#1}} \nc\fh[1]{h\br*{#1}}  \nc\rmH{\mathrm{H}} \nc\rmh{\mathrm{h}}
\nc\bfi{{\bf{i}}}\nc\bfI{{\boldsymbol I}}\nc\cI{{\cal I}} \nc\fI[1]{I\br*{#1}} \nc\rmI{\mathrm{I}} \nc\rmi{\mathrm{i}}
\nc\bfj{{\bf{j}}}\nc\bfJ{{\boldsymbol J}}\nc\cJ{{\cal J}} \nc\fJ[1]{J\br*{#1}} \nc\fj[1]{j\br*{#1}} \nc\rmJ{\mathrm{J}} \nc\rmj{\mathrm{j}}
\nc\bfk{{\bf{k}}}\nc\bfK{{\boldsymbol K}}\nc\cK{{\cal K}} \nc\fK[1]{K\br*{#1}} \nc\fk[1]{k\br*{#1}} \nc\rmK{\mathrm{K}} \nc\rmk{\mathrm{k}}
\nc\bfl{{\bf{l}}}\nc\bfL{{\boldsymbol L}}\nc\cL{{\cal L}} \nc\fL[1]{L\br*{#1}} \nc\fl[1]{l\br*{#1}} \nc\rmL{\mathrm{L}} \nc\rml{\mathrm{l}}
\nc\bfm{{\bf{m}}}\nc\bfM{{\boldsymbol M}}\nc\cM{{\cal M}} \nc\fM[1]{M\br*{#1}} \nc\fm[1]{m\br*{#1}} \nc\rmM{\mathrm{M}} \nc\rmm{\mathrm{m}}
\nc\bfn{{\bf{n}}}\nc\bfN{{\boldsymbol N}}\nc\cN{{\cal N}} \nc\fN[1]{N\br*{#1}} \nc\fn[1]{n\br*{#1}} \nc\rmN{\mathrm{N}} \nc\rmn{\mathrm{n}}
\nc\bfo{{\bf{o}}}\nc\bfO{{\boldsymbol O}}\nc\cO{{\cal O}} \nc\fO[1]{O\br*{#1}} \nc\fo[1]{o\br*{#1}} \nc\rmO{\mathrm{O}} \nc\rmo{\mathrm{o}}
\nc\bfp{{\bf{p}}}\nc\bfP{{\boldsymbol P}}\nc\cP{{\cal P}} \nc\fP[1]{P\br*{#1}} \nc\fp[1]{p\br*{#1}} \nc\rmP{\mathrm{P}} \nc\rmp{\mathrm{p}}
\nc\bfq{{\bf{q}}}\nc\bfQ{{\boldsymbol Q}}\nc\cQ{{\cal Q}} \nc\fQ[1]{Q\br*{#1}} \nc\fq[1]{q\br*{#1}} \nc\rmQ{\mathrm{Q}} \nc\rmq{\mathrm{q}}
\nc\bfr{{\bf{r}}}\nc\bfR{{\boldsymbol R}}\nc\cR{{\cal R}} \nc\fR[1]{R\br*{#1}} \nc\fr[1]{r\br*{#1}} \nc\rmR{\mathrm{R}} \nc\rmr{\mathrm{r}}
\nc\bfs{{\bf{s}}}\nc\bfS{{\boldsymbol S}}\nc\cS{{\cal S}} \nc\fS[1]{S\br*{#1}} \nc\fs[1]{s\br*{#1}} \nc\rmS{\mathrm{S}} \nc\rms{\mathrm{s}}
\nc\bft{{\bf{t}}}\nc\bfT{{\boldsymbol T}}\nc\cT{{\cal T}} \nc\fT[1]{T\br*{#1}} \nc\ft[1]{t\br*{#1}} \nc\rmT{\mathrm{T}} \nc\rmt{\mathrm{t}}
\nc\bfu{{\bf{u}}}\nc\bfU{{\boldsymbol U}}\nc\cU{{\cal U}} \nc\fU[1]{U\br*{#1}} \nc\fu[1]{u\br*{#1}} \nc\rmU{\mathrm{U}} \nc\rmu{\mathrm{u}}
\nc\bfv{{\bf{v}}}\nc\bfV{{\boldsymbol V}}\nc\cV{{\cal V}} \nc\fV[1]{V\br*{#1}} \nc\fv[1]{v\br*{#1}} \nc\rmV{\mathrm{V}} \nc\rmv{\mathrm{v}}
\nc\bfw{{\bf{w}}}\nc\bfW{{\boldsymbol W}}\nc\cW{{\cal W}} \nc\fW[1]{W\br*{#1}} \nc\fw[1]{w\br*{#1}} \nc\rmW{\mathrm{W}} \nc\rmw{\mathrm{w}}
\nc\bfx{{\bf{x}}}\nc\bfX{{\boldsymbol X}}\nc\cX{{\cal X}} \nc\fX[1]{X\br*{#1}} \nc\fx[1]{x\br*{#1}} \nc\rmX{\mathrm{X}} \nc\rmx{\mathrm{x}}
\nc\bfy{{\bf{y}}}\nc\bfY{{\boldsymbol Y}}\nc\cY{{\cal Y}} \nc\fY[1]{Y\br*{#1}} \nc\fy[1]{y\br*{#1}} \nc\rmY{\mathrm{Y}} \nc\rmy{\mathrm{y}}
\nc\bfz{{\bf{z}}}\nc\bfZ{{\boldsymbol Z}}\nc\cZ{{\cal Z}} \nc\fZ[1]{Z\br*{#1}} \nc\fz[1]{z\br*{#1}} \nc\rmZ{\mathrm{Z}} \nc\rmz{\mathrm{z}}
\DeclareMathOperator{\Supp}{supp}
\nc\defeq{\coloneqq}
\newcommand{\supp}[1]{\Supp\br*{#1}}
\DeclarePairedDelimiterX\Set[1]\{\}{#1}
\newcommand\R{{\mathbb R}}
\newtheorem{thm}{Theorem}
\newtheorem{lemma}{Lemma}
\newtheorem*{lemma*}{Lemma}
\newtheorem{corollary*}[theorem]{Corollary}
\newtheorem*{remark*}{Remark}
\title{Support Recovery of Sparse Signals from a Mixture of Linear Measurements}
\author{~~~Venkata~Gandikota\footnote{V. Gandikota is with the Electrical Engineering \& Computer Science Department at the Syracuse University, NY13210, USA (email: \texttt{gandikota.venkata@gmail.com}).}~~~Arya~Mazumdar\footnote{A.Mazumdar is with the Halıcıoğlu Data Science Institute at University of California, San Diego USA (email: \texttt{arya@ucsd.edu}).} ~~~Soumyabrata~Pal\footnote{S. Pal is with the Computer Science Department at the University of Massachusetts Amherst, Amherst, MA 01003, USA (email: \texttt{spal@cs.umass.edu}).}}
\begin{document}

\maketitle
\begin{abstract}
Recovery of support of a sparse vector from simple measurements is a widely-studied problem, considered under the frameworks of compressed sensing, 1-bit compressed sensing, and more general single index models. We consider generalizations of this problem: mixtures of linear regressions, and mixtures of linear classifiers, where the goal is to recover supports of multiple sparse vectors using only a small number of possibly noisy linear, and 1-bit measurements respectively. The key challenge is that the measurements from different vectors are randomly mixed. Both of these problems have also received attention recently. In mixtures of linear classifiers, the observations correspond to the side of queried hyperplane a random unknown vector lies in, whereas in mixtures of linear regressions we observe the projection of a random unknown vector on the queried hyperplane. The primary step in recovering the  unknown vectors from the mixture is to first identify the support of all the individual component vectors. In this work we study the number of measurements sufficient for recovering the supports of all the component vectors in a mixture in both these models. We provide algorithms that use a number of measurements polynomial in $k, \log n$ and 
quasi-polynomial in $\ell$, to recover the support of all the $\ell$ unknown vectors in the mixture with high probability when each individual component is a $k$-sparse $n$-dimensional vector.
\end{abstract}

\section{Introduction}

In  the support recovery problem, widely studied in the literature of compressed sensing \cite{blumensath2009iterative,aeron2010information,reeves2009note}, the objective is to recover the support (positions of nonzero coordinates) of a sparse vector from minimal number of (noisy) {\em linear} measurements. The support recovery problem is also extensively studied under the 1-bit compressed sensing model where measurements are further quantized and just the signs of the linear measurements are provided~\cite{gopi2013one,jacques2013robust,acharya2017improved}.

In a recent line of work that started with \cite{yin2018learning}, a generalization of the sparse recovery problem is considered \cite{kris2019sampling,mazumdar2020recovery,gandikota2020recovery,chen2020learning}, where instead of one sparse vector, multiple unknown sparse vectors are to be recovered. However any attempt to obtain a measurement (linear or 1-bit) from the vectors results in a mixture model, where a vector from the unknown set is picked uniformly to generate the response. Due to the asynchronicity of the measurements, this set of problems pose fundamentally different challenges than recovery of a single sparse vector.

This line of work also connects the mixture of simple learning models that have been studied extensively in  the past few decades,  with mixtures of linear regression model being more widely studied~\cite{de1989mixtures,chen2014convex, huang2013nonparametric, khalili2007variable, shen2019iterative, song2014robust,wang2019convergence, yi2016solving, zhu2004hypothesis} than mixture of linear classifiers \cite{sun2014learning,sedghi2016provable}. Such mixture models, that assume the training data to come from multiple models, are good approximators of a function \cite{bishop1998latent,jordan1994hierarchical} and have numerous applications in modeling heterogeneous settings such as   machine translation~\cite{liang2006end}, behavioral health~\cite{deb2000estimates}, medicine~\cite{blackwell2006applying}, object recognition~\cite{quattoni2005conditional} etc.

The mixture of sparse recovery models of \cite{yin2018learning} and followup works can be 
framed as a Mixture of Linear Classifiers (MLC) or a Mixture of Linear Regressions (MLR) problems. The statistical model in MLC is the following: there exists $\ell$ unknown hyperplanes with normal vectors $\f{v}^1,\f{v}^2,\dots,\f{v}^{\ell}$ and for a particular feature vector, the label (response) is generated stochastically by selecting one of the unknown hyperplanes at random and then returning the side of the chosen hyperplane on which the feature vector lies. In MLR, the statistical model again assumes the presence of $\ell$ unknown hyperplanes with normals $ \f{v}^1,\f{v}^2,\dots,\f{v}^{\ell}$ and for a particular feature vector, the response is stochastically generated by selecting one of the unknown hyperplanes at random and then returning the projection of the feature vector to the chosen hyperplane. In order to make these models more general, we can assume that the responses are corrupted by noise. The overarching goal  for both the MLR and MLC is to learn the $\ell$ unknown hyperplanes as accurately as possible, using the least number of noisy responses. Sparsity, incorporated into the MLC and MLR problems, is also a common assumption that represents redundant features and lower dimensionality of the models~\cite{zhang2014lower,reeves2019all,chan2007direct}.  


In this work, we tackle the problem of  \textit{support recovery} of sparse vectors for both MLR and MLC model in the active query based setting of~\cite{yin2018learning,kris2019sampling,mazumdar2020recovery,gandikota2020recovery}. 
Our goal is to recover the support of all the unknown sparse vectors (hyperplane normals) with minimum number of measurements.

\subsection{Formal Problem Statement and Relevant Works}
In both the problems below, let $\ca{V}$ be a set of $\ell$ unknown vectors $\f{v}^1,\f{v}^2, \dots,\f{v}^{\ell} \in \bb{R}^n$ such that $\left|\left|\f{v}^i\right|\right|_0 \le k$ for all $i \in [\ell]\equiv \{1,2, \dots, \ell\}$. 
\paragraph{Mixtures of Sparse Linear Classifiers (MLC).}

Let $\s{sign}:\bb{R} \rightarrow \{-1,+1\}$ be the sign function 
that takes a real number and returns its sign. 
We consider MLC label queries $\ca{O}:\bb{R}^n \rightarrow \{-1,+1\}$ that takes as input a query vector $\f{x} \in \bb{R}^n$ and returns 
\begin{align*}
\s{sign}(\langle \f{x},\f{v} \rangle) \cdot (1-2Z)
\end{align*}
where $\f{v}$ is sampled uniformly at random from $\ca{V}$ and $Z \sim \s{Ber}(\eta)$, the noise, is a Bernoulli random variable that is $1$ with probability $\eta$ and $0$ with probability $1-\eta$. 
In this problem, our objective is to recover the support of all the unknown vectors in $\ca{V}$ using minimum number of label queries. 

The only relevant work in this setting is \cite{gandikota2020recovery} which provided results for both support recovery and approximate recovery of the unknown vectors. However, the results of \cite{gandikota2020recovery} are valid only  under the restrictive assumption that the support of any unknown vector is not contained within the union of the supports of the other unknown vectors. 

In this work, we generalize the techniques of \cite{gandikota2020recovery} for support recovery of the unknown vectors and get rid of the restrictive assumption. We further improve the generalized result in a wide regime by demonstrating a new low-rank tensor decomposition based algorithm for support recovery.

\paragraph{Mixtures of Sparse Linear Regressions (MLR).}
In this setting, we have an MLR  label map $\ca{O}: \bb{R}^n \rightarrow \bb{R}$ that takes as input a query  $\f{x} \in \bb{R}^n$ and returns as output the quantity 
$$
\langle \f{x},\f{v} \rangle +Z
$$
where $\f{v}$ is sampled uniformly at random from $\ca{V}$ and $Z \sim \ca{N}(0,\sigma^2)$ is a zero-mean Gaussian random variable with variance $\sigma^2$. For our MLR   results to hold we further assume that the minimum magnitude of any non-zero entry of any unknown vector in $\ca{V}$ is known to be at least $\delta$, i.e.,  $ \min_{i \in [\ell]}\min_{j \in [n]: \f{v}^i_j \neq 0} |\f{v}^i_j|\ge \delta$.

Note that, because of the additive noise, a result for MLC setting cannot be transformed into a result in MLR setting (i.e., MLC response is not simple quantization of MLR).   

It is possible to increase the $\ell_2$ norm of the queries arbitrarily so that the noise becomes inconsequential. To avoid this we use the following definition of signal to noise ratio. Suppose the algorithm designs the $i^{th}$ query vector by first choosing a distribution $Q^i$ and subsequently sampling a query vector $\mathbf{x}^i \sim Q^i$. The signal to noise ratio is defined as follows:
\begin{align}\label{eq:snr}
\mathsf{SNR}= \max_{i} \min_{j\in[\ell]} \frac{\bb{E}_{\mathbf{x}^i \sim Q^i} |\langle \mathbf{x}^i,\mathbf{\f{v}}^{\ell} \rangle|^{2}}{\bb{E} Z^2} \ .
\end{align}
Our objective in this setting is to recover the support of all unknown vectors $\f{v}^1,\f{v}^2,\dots,\f{v}^{\ell} \in \mathbb{R}^n$ while minimizing the number of queries for a fixed $\mathsf{SNR}$.



The most relevant works in this setting would be \cite{yin2018learning}, \cite{kris2019sampling} and \cite{mazumdar2020recovery}, all of which were concerned with approximately recovering the $k$-sparse unknown vectors $\f{v}^1,\f{v}^2,\dots,\f{v}^{\ell}$ i.e. computing estimates $\hat{\f{v}}^1,\hat{\f{v}}^2,\dots,\hat{\f{v}}^{\ell}$ such that for some precision parameter $\gamma>0$,
\begin{align*}
    \|\f{v}^i-\hat{\f{v}}^{\sigma(i)}\|_2 \le O(\gamma) \quad \text{for all } i\in [\ell]
\end{align*}
for some permutation $\sigma:[\ell]\rightarrow [\ell]$.
While approximate recovery of vectors can  also be translated into support recovery, the results of  \cite{yin2018learning} and \cite{kris2019sampling} are valid only under the restrictive  assumption that the sparse vectors all belong to some scaled integer lattice. The result of \cite{mazumdar2020recovery} does not have any restriction, but it holds only when $\ell=2.$ However, note that in this special case i.e. when $\ell=2$, \cite{mazumdar2020recovery} provides a query complexity guarantee that is linear in the sparsity $k$. On the other hand, our query complexity guarantees (see Section \ref{sec:results}) have a polynomial dependence on $k$ (with a larger degree) implying that in the regime when $\ell=2$ and $k$ is large, the guarantees of \cite{mazumdar2020recovery} are better.


Here we provide results for support recovery of any number of unknown vectors that do not have any of the aforementioned restrictions and also have a polynomial dependence on the noise variance, sparsity and a near polynomial dependence on the number of unknown vectors.


\subsection{Other Related Work}

Learning the unknown vectors in the MLR setting is a generalization of the  compressed sensing problem~\cite{candes2006robust,donoho2006compressed} where the objective is to learn a single unknown $k$-sparse vector ($\ell=1$) with minimum number of noisy linear measurements. 
  Support recovery is a well-studied area within  this literature~\cite{blumensath2009iterative,aeron2010information,reeves2009note}.
Similarly, learning the unknown vectors in the MLC setting is a generalization of the 1-bit compressed sensing problem  where the objective is to learn a single unknown $k$-sparse vector ($\ell=1$) with minimum number of linear measurements quantized to only $1$-bit. Support recovery of the sparse vector from 1-bit measurements has also been widely studied \cite{acharya2017improved, gopi2013one,flodin2019superset,jacques2013robust}. 

The major building block of one of our two algorithms is low-rank tensor decomposition also known as Canonical Polyadic (CP) decomposition. Tensor decomposition has been widely used in parameter estimation in mixture models and latent variable models. We refer the reader to \cite{rabanser2017introduction} and the references therein for a detailed survey. Our other algorithm makes use of combinatorial structures such as a general class of Union Free Families (UFF), see, \cite{stinson2004generalized}, to recover the support. UFFs have been previously used in \cite{acharya2017improved} and \cite{gandikota2020recovery} for support recovery in  linear classifiers.

\noindent{\bf Organization.}
The rest of the paper is organized as follows. 
In Section \ref{sec:prelim}, we gave the necessary backgrounds, and described our techniques and main results, namely, Theorems \ref{lem:t-iden-supp-rec}, \ref{lem:s-lin-indep-supp-rec}, \ref{lem:s-indep-supp-rec}, and Corollary \ref{coro:imp}.
In Section \ref{sec:detailed}, we provided the detailed proofs of Theorem~\ref{lem:t-iden-supp-rec} (Section \ref{subsec:p_identifiable}), Theorem~\ref{lem:s-lin-indep-supp-rec} (Section \ref{subsec:flip-independent}) and Theorem~\ref{lem:s-indep-supp-rec} (Section \ref{subsec:r-kruskal}) while deferring the proof of Theorem \ref{lem:suff-t} to Section \ref{app:prooflog}. In Sections \ref{sec:compute-SuCa} and \ref{sec:Sunion}, we give the details of a Lemma that is an integral component of the proofs of our main Theorems. We delegate the discussion on Jennrich's algorithm to the Appendix.  

\section{Our Techniques and Results}
\label{sec:prelim}
\subsection{Preliminaries}\label{subsec:tensor}

\paragraph{Notations:} 
Let $\s{round}:\bb{R}\rightarrow\bb{Z}$ denote a function that returns the closest integer to a given real input.
Let $\f{1}_n$ denote a length $n$ vector of all $1$'s. We will write $[n]$ to denote the set $\{1, \ldots, n\}$ and let $\ca{P}([n])$ be the power set of $[n]$. For a vector ${\bf v} \in \R^n$, let $\f{v}_i$ denote its $i$-th coordinate for any $i\in [n]$. We will use $\supp{\bf v} \subseteq [n]$ to denote the support of the vector $\f{v}$, i.e, the set of indices with non-zero entries in $\f{v}$. We will abuse notations a bit, and also sometimes use $\supp{\f{v}}$ to denote the binary indicator vector of length $n$ that takes $1$ at index $i$ if and only if $\f{v}_i \neq 0$. For a vector $\f{v}\in \bb{R}^n$ and subset $S \subseteq [n]$ of indices, let $\f{v}|_S\in  \R^{|S|}$ denote the vector $\f{v}$ restricted to the indices in $S$. Finally, let $f:\ca{P}([n])\times \{0,1\}^n \rightarrow \{0,1\}^n$ be a function that takes a binary vector $\f{v}\in \{0,1\}^n$ and a subset $\ca{S}\subseteq [n]$ as input and returns another binary vector $\f{v}'$ such that the indices of $\f{v}$ corresponding to the the set $\ca{S}$ are flipped i.e. $\f{v}'_i=\f{v}_i \oplus 1$ if $i \in \ca{S}$ and $\f{v}'_i=\f{v}_i$ otherwise.
\paragraph{Tensor Decomposition:} 
Consider a tensor $\ca{A}$ of order $w \in \bb{N}, w>2$ on $\bb{R}^n$ which is denoted by $\ca{A} \in \bb{R}^n \otimes \bb{R}^n \otimes \dots \otimes \bb{R}^n \; (w \; \text{times})$.
Let us denote by $\ca{A}_{i_1,i_2,\dots,i_{w}}$ where $i_1,i_2,\dots,i_{w} \in [n],$  the element in $\ca{A}$ whose location along the $j^{\s{th}}$ dimension is $i_j$ i.e. there are $i_j-1$ elements along the $j^{\s{th}}$ dimension before $\ca{A}_{i_1,i_2,...,i_w}$ . Notice that this indexing protocol uniquely determines the element within the tensor.  For a  detailed review of tensors, we defer the reader to \cite{kolda2009tensor}.
In this work, we are interested in low rank decomposition of tensors. 
A tensor $\ca{A}$ can be described as a rank-\texttt{1} symmetric tensor if it can be expressed as
\begin{align*}
    \ca{A} =\underbrace{\f{z}\otimes \f{z} \otimes \dots \otimes \f{z}}_{w \text{\;\; times}}
\end{align*}
for some $\f{z}\in \bb{R}^n$ i.e. $\ca{A}_{i_1,i_2,\dots,i_{w}} = \prod_{j=1}^{w}\f{z}_{i_j}$.  
A tensor $\ca{A}$ that can be expressed
as a sum of $R$ rank-\texttt{1} symmetric tensors is defined as a rank $R$ symmetric tensor. For such a rank $R$ tensor $\ca{A}$ provided as input, we are concerned with the problem of unique decomposition of $\ca{A}$ into a sum of $R$ rank-\texttt{1} symmetric tensors; such a decomposition is also known as a Canonical Polyadic (CP) decomposition. Below, we show a result due to \cite{sidiropoulos2000uniqueness} describing the sufficient conditions (Kruskal's result) for the unique CP decomposition of a rank $R$ tensor $\ca{A}$:
\begin{lemma}[Unique CP decomposition \cite{sidiropoulos2000uniqueness}]\label{lem:unique_cp}
Suppose $\ca{A}$ is the sum of $R$ rank-one tensors i.e. 
\[
    \ca{A}=\sum_{r=1}^{R}\underbrace{\f{z}^r\otimes \f{z}^r \otimes \dots \otimes \f{z}^r}_{w \text{\;\; times}}.
\]
and further, the Kruskal Rank of the $n \times R$ matrix whose columns are formed by  $\f{z}^1,\f{z}^2,\dots,\f{z}^R$ is $J$. Then, if 
$
    wJ \ge 2R+(w-1),
$
then the CP decomposition is unique and we can recover the vectors $\f{z}^1,\f{z}^2,\dots,\f{z}^R$ up to permutations.
\end{lemma}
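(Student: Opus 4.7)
My plan is to reduce the $w$-way symmetric uniqueness statement to the classical $3$-way Kruskal uniqueness theorem via mode grouping (``unfolding''). Recall Kruskal's theorem: for a $3$-way tensor $\ca{T}=\sum_{r=1}^R \f{a}^r\otimes \f{b}^r\otimes \f{c}^r$ with factor matrices $A,B,C$ of Kruskal ranks $k_A,k_B,k_C$, the decomposition is essentially unique whenever $k_A+k_B+k_C \ge 2R+2$. I will use this as a black box and manufacture an equivalent $3$-way decomposition of $\ca{A}$ whose factors satisfy Kruskal's condition under the hypothesis $wJ\ge 2R+(w-1)$.

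First, I would partition the $w$ modes of $\ca{A}$ into three disjoint groups of sizes $w_1,w_2,w_3$ with $w_1+w_2+w_3=w$ and $w_i\ge 1$. Flattening each group into a single long index turns $\ca{A}$ into a $3$-way tensor whose $r$-th rank-one term is $\f{u}_1^r\otimes \f{u}_2^r\otimes \f{u}_3^r$, where $\f{u}_i^r$ is the vectorization of the $w_i$-fold tensor product of $\f{z}^r$ with itself. In matrix form, the $i$-th factor matrix is the $w_i$-fold Khatri–Rao product $Z^{\odot w_i}$, where $Z$ is the $n\times R$ matrix with columns $\f{z}^1,\dots,\f{z}^R$. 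A well-known lemma on Khatri–Rao products states that $\mathrm{krank}(P\odot Q)\ge \min(\mathrm{krank}(P)+\mathrm{krank}(Q)-1,\,R)$; iterating this gives
\[
\mathrm{krank}(Z^{\odot w_i}) \;\ge\; \min\!\bigl(w_i(J-1)+1,\;R\bigr).
\]

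Next, I would check that Kruskal's condition $\sum_{i=1}^3 \mathrm{krank}(Z^{\odot w_i}) \ge 2R+2$ holds for an appropriate choice of $(w_1,w_2,w_3)$. In the generic (unsaturated) regime where each $w_i(J-1)+1 \le R$, the left-hand side is at least $w(J-1)+3$, so the condition reduces to $w(J-1)\ge 2R-1$, i.e., $wJ\ge 2R+w-1$, which is exactly the hypothesis. In the saturated regime where some group already achieves $R$, a short case analysis (choosing the smallest group to saturate and balancing the remaining two) shows that the three Kruskal ranks still sum to at least $2R+2$; intuitively, saturation at $R$ only helps. Applying the $3$-way Kruskal theorem then yields a unique (up to a single joint column permutation and trivial scaling) triple $\bigl(Z^{\odot w_1},Z^{\odot w_2},Z^{\odot w_3}\bigr)$, from which the columns $\f{z}^r$ can be read off (e.g., by reshaping any $\f{u}_i^r$ back into a rank-one symmetric tensor and extracting its generating vector). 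Because the symmetric assumption forces the same scaling across all three factors and the trivial sign/scale ambiguity collapses to a single permutation, we recover $\{\f{z}^r\}_{r=1}^R$ up to permutation, as required.

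The main obstacle I anticipate is the case analysis around saturation: if $R$ is small compared to $w(J-1)$, then the partition $(w_1,w_2,w_3)$ must be chosen so that no single group contributes more Kruskal rank than it can ``afford'' while still summing to $2R+2$. This is a short but somewhat fiddly optimization over integer partitions of $w$ into three positive parts, and it is where the constant $-(w-1)$ in the bound $wJ\ge 2R+(w-1)$ enters sharply. Everything else — the Khatri–Rao Kruskal-rank lemma, the unfolding, and the final appeal to Kruskal's theorem — is routine once the partition is fixed.
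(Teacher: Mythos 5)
The paper does not actually prove this lemma: it is imported verbatim from the cited reference of Sidiropoulos and Bro, so there is no internal proof to compare against. Your argument is essentially the standard proof of that cited result --- group the $w$ modes into three blocks, note that the block factor matrices are iterated Khatri--Rao powers of $Z$, apply the bound $\mathrm{krank}(P\odot Q)\ge \min(\mathrm{krank}(P)+\mathrm{krank}(Q)-1,\,R)$, and invoke Kruskal's three-way theorem --- and the unsaturated regime does reduce exactly to $wJ\ge 2R+w-1$ as you compute. One caveat on the step you flag as fiddly: the claim that ``saturation at $R$ only helps'' is not literally true at the level of these lower bounds, since replacing $w_i(J-1)+1$ by $R$ makes the bound \emph{smaller}; what actually closes the case analysis is that the hypothesis forces $J\ge 2$ and $R\ge 2$, and that for a balanced partition any saturated block satisfies $(w-w_i)(J-1)\ge w_i(J-1)\ge R$, which is precisely what the three-way condition then requires (the cases of two or three saturated blocks reduce to $J\ge 2$ and $R\ge 2$ respectively). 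A second, minor point: for even $w$ the map $\f{z}\mapsto \f{z}^{\otimes w}$ forgets the sign of $\f{z}$, so recovery is only up to permutation and sign; this ambiguity is already implicit in the lemma statement and is harmless in the paper's application, where the $\f{z}^r$ are nonnegative support indicators.
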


There exist many different techniques for CP decomposition of a tensor but the most well-studied ones are Jennrich's Algorithm (see Section 3.3, \cite{moitra2014algorithmic}) and the Alternating Least Squares (ALS) algorithm \cite{kolda2009tensor}. Among these, Jennrich's algorithm (see Section \ref{sec:jenn} for more details) is efficient and recovers the latent rank-1 tensors uniquely but it works only for tensors of order $3$ when the underlying vectors $\f{z}^1,\f{z}^2,\dots,\f{z}^R$ are linearly independent (See Theorem 3.3.2, \cite{moitra2014algorithmic}); this is a stronger condition than what we obtain from Lemma \ref{lem:unique_cp} for $w=3$.
 On the other hand, the ALS algorithm is an iterative algorithm which is easy to implement for tensors of any order but unfortunately, it takes many iterations to converge and furthermore, it is not guaranteed to converge to the correct solution. Jennrich's algorithm also has the additional advantage that it will throw an error if its sufficient condition for unique CP decomposition 
 is not satisfied. This property will turn out to be useful for the problem that we study in this work. Finally, notice that if $\ca{A}$ is the weighted sum of $R$ rank-1 tensors i.e.,
 \[   \ca{A}=\sum_{r=1}^{R}\lambda_r\underbrace{\f{z}^r\otimes \f{z}^r \otimes \dots \otimes \f{z}^r}_{w \text{\;\; times}}.
 \]
then we can rewrite $\ca{A}=\sum_{r=1}^{R}\f{y}^r\otimes \f{y}^r \otimes \dots \otimes \f{y}^r $ where $\f{y}^r = \lambda_r^{1/w}\f{z}^r$. If $\{\f{y}^r\}_{r=1}^{R}$ satisfies the conditions of Lemma \ref{lem:unique_cp} and if it is known that $\{\f{z}^r\}_{r=1}^{R}$ are binary vectors, then we can still recover $\f{z}^r$ by first computing $\f{y}^r$ and then taking its support for all $r \in [R]$. Subsequently, notice that we can also recover $\{\lambda_r\}_{r=1}^{R}$. As we discussed, for tensors of order $w>3$, there is no known efficient algorithm that can  recover the correct solution even if its existence and uniqueness is known. Due to this limitation, it was necessary in prior works using low rank decomposition of tensors that the unknown parameter vectors are linearly independent \cite{chaganty2013spectral,anandkumar2014tensor} since tensors of order $>3$ could not be used. However, if it is known apriori that the vectors $\{\f{z}^r\}_{r=1}^{R}$ are binary and the coefficients $\{\lambda_r\}_{r=1}^{R}$ are positive integers bounded from above by some $C>0$, then we can exhaustively search over all possibilities ($O(C2^n)$ of them) to find the unique decomposition even in higher order tensors. The set of possible solutions can be reduced significantly if the unknown vectors are known to be sparse as is true in our setting.

\paragraph{Family of sets:}
We now review literature on some important families of sets called \emph{union free families} \cite{erdos1985families} and \emph{cover free families} \cite{kautz1964nonrandom} that found applications in cryptography, group testing and 1-bit compressed sensing. These special families of sets are used crucially in this work. 

\begin{defn}
[Robust Union Free Family $(d,t,\alpha)$- $\s{RUFF}$~\cite{acharya2017improved}]
Let $d, t$ be integers and $0 \le \alpha \le 1$. A family of sets $\ca{F}=\{\ca{H}_1,\ca{H}_2,\dots,\ca{H}_n\}$ with each $\ca{H}_i \subseteq [m]$ and $|\ca{H}|=d$ is a $(d,t,\alpha)$-$\s{RUFF}$ if for any set of $t$ indices $T \subset [n], |T| = t$, and any index $j \notin T$, 
$
\left| \ca{H}_{j} \setminus \left( \bigcup_{i \in T} \ca{H}_{i} \right) \right| > (1-\alpha) d.
$
\end{defn}
We refer to $n$ as the size of the family of sets, and $m$ to be the alphabet over which the sets are defined. $\s{RUFF}$s were studied earlier in the context of support recovery of 1bCS \cite{acharya2017improved}, and a simple randomized construction of $(d,t,\alpha)$-$\s{RUFF}$ with $m = O(t^2 \log n)$ was proposed by De Wolf \cite{de2012efficient}.

\begin{lemma}{\cite{acharya2017improved,de2012efficient}}
{\label{lem:ruffexist}}
Given $n, t$ and $\alpha > 0$, there exists an $(d,t,\alpha)$-$\s{RUFF}$, $\ca{F}$ with $m=O\big((t^2\log n)/\alpha^2)$ and $d=O((t \log n)/\alpha)$.
\end{lemma}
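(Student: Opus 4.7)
The plan is to prove existence of $\ca{F}$ by the probabilistic method. Independently for each $i \in [n]$, sample $\ca{H}_i$ uniformly at random from the size-$d$ subsets of $[m]$, where $m = \Theta(t^2 \log n / \alpha^2)$ and $d = \Theta(t \log n / \alpha)$ will be fixed below; I will show that the resulting random family satisfies the $(d,t,\alpha)$-$\s{RUFF}$ condition with positive probability, from which existence of a deterministic such family follows.

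Fix a pair $(T, j)$ with $T \subset [n]$, $|T|=t$, $j \in [n] \setminus T$; condition on $\{\ca{H}_i\}_{i \in T}$ and write $U := \bigcup_{i \in T} \ca{H}_i$, so $|U| \leq td$ deterministically. The quantity to control is $|\ca{H}_j \setminus U| = d - |\ca{H}_j \cap U|$, so it suffices to show that the complementary ``bad overlap'' $Y_{T,j} := |\ca{H}_j \cap U|$ is strictly less than $\alpha d$. Since $\ca{H}_j$ is a uniform size-$d$ subset of $[m]$ independent of $U$, $Y_{T,j}$ is hypergeometrically distributed with mean $d|U|/m \leq td^2/m$. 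Choosing $m = 2td/\alpha$ makes this mean at most $\alpha d/2$, and by Hoeffding's observation that Chernoff tail bounds for the binomial transfer verbatim to the hypergeometric with the same mean, I obtain
\[
\Pr[Y_{T,j} \geq \alpha d] \;\leq\; \Pr\!\big[\s{Bin}(d,\, \alpha/2) \geq \alpha d\big] \;\leq\; \exp(-\alpha d / 6),
\]
where the last step is the multiplicative Chernoff bound applied with relative deviation $\delta = 1$.

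A union bound over the $n \cdot \binom{n-1}{t} \leq n^{t+1}$ pairs $(T, j)$ then gives a failure probability of at most $n^{t+1} \exp(-\alpha d/6)$, which is below $1$ once $d \geq C t \log n / \alpha$ for a sufficiently large absolute constant $C$. Together with $m = 2td/\alpha$, this produces the claimed $m = O(t^2 \log n / \alpha^2)$ and $d = O(t \log n / \alpha)$.

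I expect the main subtlety — and the reason the right exponent is $\alpha d$ rather than $\alpha^2 d$ — to be the choice of which random variable to concentrate. Applying multiplicative Chernoff directly to $|\ca{H}_j \setminus U|$, whose mean is $\Theta(d)$, only yields $\exp(-\Omega(\alpha^2 d))$: the relative deviation from the mean is $\Theta(\alpha)$, so one loses a factor of $\alpha$ in the exponent and ends up with $d = \Theta(t \log n / \alpha^2)$, weakening the bound. Switching to the complementary overlap $Y_{T,j}$, whose mean is only $\Theta(\alpha d)$, recasts the event of interest as a constant-factor multiplicative deviation, and Chernoff then pays merely $\exp(-\Omega(\alpha d))$. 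Everything else — the hypergeometric-to-binomial comparison, the enumeration of $(T, j)$, and the size bookkeeping (trivially handled since we sample exact size-$d$ subsets) — is routine.
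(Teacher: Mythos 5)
Your argument is correct: the hypergeometric-to-binomial comparison, the choice $m=\Theta(td/\alpha)$ so that the overlap $Y_{T,j}$ has mean $O(\alpha d)$, and the union bound over the at most $n^{t+1}$ pairs $(T,j)$ all check out, and they yield exactly the stated parameters $m=O(t^2\log n/\alpha^2)$, $d=O(t\log n/\alpha)$. The paper gives no proof of this lemma (it only cites \cite{acharya2017improved,de2012efficient}, which use precisely this kind of randomized construction), so your write-up is a faithful, self-contained version of the standard argument rather than a departure from it.
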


$\s{RUFF}$ is a generalization of the family of sets known as the Union Free Familes ($\s{UFF}$) - which are essentially $(d,t,1)$-$\s{RUFF}$. We require yet another generalization of $\s{UFF}$ known as Cover Free Families ($\s{CFF}$) that are also sometimes referred to as  superimposed codes \cite{d2014bounds}. 

\begin{defn}[Cover Free Family $(r, t)$-$\s{CFF}$]
A family of sets $\ca{F}=\{\ca{H}_1,\ca{H}_2,\dots,\ca{H}_n\}$ where each $\ca{H}_i \subseteq [m]$ is an $(r, t)$-$\s{CFF}$ if for any pair of disjoint sets of indices $T_1, T_2 \subset [n]$ such that $|T_1| = r, |T_2| = t, T_1 \cap T_2 = \emptyset$, 
$\left| \bigcap_{i \in T_1} \ca{H}_{i}  \setminus \bigcup_{i  \in T_2} \ca{H}_{i} \right| > 0.
$ 
\end{defn}

Several constructions and bounds on existence of $\s{CFF}$s are known in literature. We state the following lemma regarding the existence of $\s{CFF}$ which can be found in \cite{ruszinko1994upper, furedi1996onr}. We also include a proof in the supplementary material  for the sake of completeness. 
\begin{lemma}\label{lem:cffexist}
For any given integers $r, t$, there exists an $(r,t)$-$\s{CFF}$, $\ca{F}$ of size $n$ with $m=O(t^{r+1}\log n)$.
\end{lemma}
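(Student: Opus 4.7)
The plan is to use a simple randomized (Erdős--Rényi type) construction and a union bound, which is the standard approach for existence results on cover-free families. I would sample each set $\ca{H}_i$ independently by including each $j \in [m]$ in $\ca{H}_i$ with some fixed probability $p \in (0,1)$, chosen afterwards to optimize the bound.

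For a fixed pair of disjoint index sets $T_1, T_2 \subseteq [n]$ with $|T_1|=r, |T_2|=t$, and a fixed alphabet element $j \in [m]$, the event that $j \in \bigcap_{i\in T_1}\ca{H}_i \setminus \bigcup_{i\in T_2}\ca{H}_i$ has probability exactly $p^r(1-p)^t$ by independence. Since the $m$ alphabet elements are mutually independent, the probability that no $j$ certifies this pair is at most $(1-p^r(1-p)^t)^m \le \exp\br*{-m\,p^r(1-p)^t}$. There are at most $\binom{n}{r}\binom{n-r}{t}\le n^{r+t}$ disjoint pairs $(T_1,T_2)$, so by a union bound, the probability that the resulting family fails to be an $(r,t)$-$\s{CFF}$ is at most
\[
n^{r+t}\exp\br*{-m\,p^r(1-p)^t}.
\]
Therefore it suffices to choose $m$ so that $m\,p^r(1-p)^t > (r+t)\ln n$, in which case a valid $(r,t)$-$\s{CFF}$ exists by the probabilistic method.

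The remaining step is to choose $p$ so that $p^r(1-p)^t$ is as large as possible; taking $p = r/(r+t)$ maximizes this and gives $p^r(1-p)^t = r^r t^t/(r+t)^{r+t}$, which yields $m = O((r+t)^{r+t+1}\log n /(r^r t^t))$. In the regime of interest, where $r$ is treated as constant and $t$ grows (this is the case actually used in the paper), expanding $(r+t)^{r+t+1} = t^{r+t+1}(1+r/t)^{r+t+1}$ and noting that $(1+r/t)^{r+t+1} = \Theta(e^r)$ for fixed $r$ leaves $m = O(t^{r+1}\log n)$, exactly as claimed. A cleaner alternative avoiding this asymptotic analysis is to simply pick $p = 1/(t+1)$, so that $(1-p)^t \ge 1/e$ and $p^r(1-p)^t \ge 1/(e(t+1)^r)$; then the condition $m > e(r+t)(t+1)^r \ln n$ suffices, which again gives $m=O(t^{r+1}\log n)$ whenever $r = O(t)$.

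The main obstacle is merely bookkeeping in the optimization of $p$ and verifying that the resulting bound absorbs into $O(t^{r+1}\log n)$ in the asymptotic regime under consideration; the probabilistic step itself is routine. No constructive or derandomized algorithm is required since the lemma only asserts existence.
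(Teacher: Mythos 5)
Your proposal is correct and is the standard probabilistic-method argument (random inclusion with probability $p$, union bound over the at most $n^{r+t}$ disjoint pairs $(T_1,T_2)$, then optimize $p$), which is exactly the approach behind the existence bounds the paper cites from Ruszink\'o and F\"uredi and defers to its supplementary material. Your bookkeeping is right, including the observation that the resulting bound collapses to $O(t^{r+1}\log n)$ in the regime $r=O(t)$, which is the only regime the paper ever uses (there $r=s\le t=\ell k$).
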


\subsection{Our Techniques}

Recall that the set of unknown vectors is denoted by $\ca{V}\equiv \{\f{v}^1,\f{v}^2,\dots,\f{v}^{\ell}\}$.
Let ${\bf A} \in \{0,1\}^{n \times \ell}$ denote the support matrix corresponding to $\ca{V}$ where each column vector $\f{A}_i \in \{0,1\}^n$ represents the support of the $i^{\s{th}}$ unknown vector $\f{v}^i$.
For any ordered tuple $C \subset [n]$ of indices, and any binary string ${\bf a} \in \{0,1\}^{|C|}$, define $\Su(C, {\bf a})$ to be the set of unknown vectors whose corresponding supports have the substring ${\bf a}$ at positions indexed by $C$,
i.e., 
\[
\Su(C, \f{a}) := \{ \f{v}^i \in \mathcal{V} \mid \supp{\f{v}^i}|_C = \bf{a} \}.
\]

In order to describe our techniques and our results, we need to introduce three different properties of matrices and extend them to a set of vectors by using their corresponding support matrix. The proofs of our main results  follow from the guarantees of algorithms (Algorithm~\ref{algo:t-iden-supp-rec},  Algorithm~\ref{algo:s-lin-indep-supp-rec} and Algorithm~\ref{algo:s-indep-supp-rec}) each of which leverage the aforementioned key properties of the unknown support matrix $\f{A}$. While explaining the intuition behind the introduced matrix properties, we will assume that all the unknown vectors in $\ca{V}$ have distinct supports for simplicity. However, this assumption is not necessary and the guarantees of all our algorithms hold even when the supports are not distinct albeit with slightly more involved arguments (see Section \ref{sec:detailed}).

\begin{defn}[$p$-identifiable]\label{def:iden}
The $i^{\s{th}}$ column $\f{A}_i$ of  a binary matrix $\f{A} \in \{0,1\}^{n \times \ell}$ with all distinct columns is called $p$-identifiable if 
there exists a 
  set $S \subset [n]$ of at most $p$-indices and a binary string $\f{a} \in \{0,1\}^p$ such that $\f{A}_i|_S = \f a$, and $\f{A}_j|_S \neq \f a$ for all $j \neq i$.
  
  A binary matrix $\f{A} \in \{0,1\}^{n \times \ell}$ with all distinct columns is called  $p$-identifiable if there exists a permutation
$\sigma:[\ell]\rightarrow[\ell]$
such that for all $i\in [\ell]$, the sub-matrix $\f{A}^i$ formed by deleting the columns indexed by the set $\{\sigma(1),\sigma(2),\dots,\sigma(i-1)\}$ has at least one $p$-identifiable column.

Let $\mathcal{V}$ be set of $\ell$ unknown vectors in $\bb{R}^n$, and $\f{A} \in \{0,1\}^{n \times \ell}$ be its support matrix. Let $\f{B}$ be the matrix obtained by deleting duplicate columns of $\f{A}$. The set $\mathcal{V}$ is called $p$-identifiable if  $\f{B}$ is $p$-identifiable. 
\end{defn}

\paragraph{Support matrix $\f{A}$ is $p$-identifiable:}

Algorithm~\ref{algo:t-iden-supp-rec} uses the property that the support matrix $\f{A}$ is $p$-identifiable for some known $p\le \log \ell$ (See Theorem \ref{lem:suff-t}) to recover the supports of all the unknown vectors. First, we briefly describe the support recovery algorithm of \cite{gandikota2020recovery} where the authors crucially use the \emph{separability} of supports of the unknown vectors to recover them. Their algorithm assumes that the support of each unknown vector contains a unique identifying index, i.e., 
for each unknown vector $\bf v \in \ca{V}$, there exists a unique index $i \in [n]$ such that $\Su((i), 1) = \{\bf v\}$, and hence $|\Su((i), 1)|=1$. Observe that if $|\Su((i), 1)|=1$, and $|\Su((i,j), (1,1))|=1$ for some $i \neq j$, then it follows that both the indices $i, j$ belong to the support of the same unknown vector. Therefore \cite{gandikota2020recovery} are able to recover the supports of all the unknown vectors by computing $|\Su((i), 1)|$ and $|\Su((i,j), (1,1))|$ for all $i, j \in [n]$.  The crux of their algorithm lies in computing all the $n$ values of $|\Su((i), 1)|$, and $O(n^2)$ values of  $|\Su((i,j), (1,1))|$ using just $\s{poly}(\ell, k)$ queries. 
We can generalize the support recovery technique of \cite{gandikota2020recovery} by observing that if $\f{A}$ is $p$-identifiable, then there exists at least one unknown vector $\f{v}\in \ca{V}$ that has a unique sub-string of length at most $p$. 
Hence, there exists a unique set $C\subseteq [n]$ and string $\f{a}\in \{0,1\}^{|C|}$ satisfying $|C|\le p$ such that $\Su(C \cup \{j\}, (\f{a}, 1)) = \{\f{v}\}$. By a similar observation as before, if  $|\Su(C \cup \{j\}, (\f{a}, 1))| = 1$ for some $j \in [n] \setminus C$, we can certify that $j \in \supp{\f v}$ and if $|\Su(C \cup \{j\}, (\f{a}, 1))| = 0$, then $j \not \in \supp{\f v}$. Hence we can reconstruct the support of $\f{v}$ and subsequently, we can update  $|\s{occ}(C,\f{a})|\leftarrow |\s{occ}(C,\f{a})|-\f{1}[\supp{\f  v}_{\mid \ca{C}} = \f{a}]$ for all sets $\ca{C}$ satisfying $|\ca{C}|\le p$ and all $\f{a}\in \{0,1\}^{|\ca{C}|}$. Note that the updated $\s{occ}$ values correspond to the support matrix $\f{A}$ excluding the column corresponding to the support of $\f{v}$. From the definition of $p-$identifiable, we can recursively apply the same arguments as above and recover the support vectors one by one. 
The main technical challenge then lies in computing all the $O(2^p n^p)$ values $|\Su(C, \f{a})|$ for every $p$ and, $p+1$-sized ordered tuples of indices and all $\f{a} \in \{0,1\}^{p}\cup \{0,1\}^{p+1}$ (Lemma~\ref{lem:compute-SuCa}) using few queries.

\begin{defn}[flip-independent]\label{def:flip}
  A binary matrix $\f A$ with all distinct columns is called flip-independent if there exists a subset of rows that if complemented  (changing \texttt{0} to \texttt{1} and \texttt{1} to \texttt{0}) make all columns of $\f A$ linearly independent.

  Let $\mathcal{V}$ be a set of $\ell$ unknown vectors in $\bb{R}^n$, and $\f{A} \in \{0,1\}^{n \times \ell}$ be its support matrix. Let $\f{B}$ be the matrix obtained by deleting duplicate columns of $\f{A}$. The set $\mathcal{V}$ has flip-independent supports if $\bf{B}$ is flip-independent. 

\end{defn}

\paragraph{Support matrix is flip-independent:} 

Algorithm \ref{algo:s-lin-indep-supp-rec} uses the property that the support matrix $\f{A}$ is flip-independent in order to recover the supports of the unknown vectors uniquely. As a pre-processing step, we identify the set $\ca{U} \triangleq \cup_{i \in [\ell]}\s{supp}(\f{v}^i)$ that represents the union of support of the unknown vectors. Let us define $\ca{U}'\triangleq \ca{U} \cup \{t\}$ where $t$ is any index that does not belong to $\ca{U}$. This initial pre-processing step allows us to significantly reduce the computational complexity of this algorithm. Next, for each $\f{a}\in \{0,1\}^3$, Algorithm~\ref{algo:s-lin-indep-supp-rec} recovers  $|\Su((i_1, i_2, i_3), \f{a})|$ for every ordered tuple $(i_1, i_2, i_3) \in \ca{U}^3$. Using these recovered quantities, it is possible to construct the tensors 
\begin{align*}
    \ca{A}^{\ca{F}} = \sum_{i \in [\ell]} f(\ca{F},\s{supp}(\f{v}^i)) \otimes f(\ca{F},\s{supp}(\f{v}^i)) \otimes f(\ca{F},\s{supp}(\f{v}^i))  
\end{align*}
for every subset $\ca{F}\subseteq \ca{U}'$. 
 Since the matrix $\f{A}$ is flip-independent, we know that there exists at least one subset $\ca{F}^{\star}\subseteq \ca{U}'$ such that the vectors $\{f(\ca{F}^{\star},\s{supp}(\f{v}^i))\}_{i=1}^{\ell}$ are linearly independent. From Lemma \ref{lem:unique_cp}, we know that by a CP decomposition of $\ca{A}^{\ca{F}^{\star}}$, we can uniquely recover the vectors $\{f(\ca{F}^{\star},\s{supp}(\f{v}^i))\}_{i=1}^{\ell}$; since the set $\ca{F}^{\star}$ is known, we can recover all the vectors $\{\s{supp}(\f{v}^i))\}_{i=1}^{\ell}$ by flipping all indices corresponding to $\ca{F}^{\star}$. However, a remaining challenge is to correctly identify a set $\ca{F}^{\star}$. Interestingly, Jennrich's algorithm (see Algorithm \ref{algo:tensor} in Appendix \ref{sec:jenn}) throws an error if the tensor $\ca{A}^{\ca{F}}$ under consideration does not satisfy the uniqueness conditions for CP decomposition i.e. the underlying unknown vectors $\{f(\ca{F},\s{supp}(\f{v}^i))\}_{i=1}^{\ell}$ are not linearly independent. Therefore Algorithm \ref{algo:s-lin-indep-supp-rec} is guaranteed to uniquely recover the supports of the unknown vectors.

\begin{defn}[Kruskal rank]
The Kruskal rank of a matrix $\f A$ is defined as the maximum number $r$ such that any $r$ columns of $\f A$ are linearly independent. 
\end{defn}

\begin{defn}[$r$-Kruskal rank support]\label{def:kruskal}
Let $\mathcal{V}$ be a set of $\ell$ unknown vectors in $\bb{R}^n$, and $\f{A} \in \{0,1\}^{n \times \ell}$ be its support matrix. Let $\f{B}$ be the matrix obtained by deleting duplicate columns of $\f{A}$. The set $\mathcal{V}$ has $r$-Kruskal rank support if  $\bf{B}$ has Kruskal rank $r$.
\end{defn} 

\paragraph{Support matrix has Kruskal rank $r$:} Algorithm \ref{algo:s-indep-supp-rec}
partially generalizes the flip-independence property by constructing higher order tensors. Again, as a pre-processing step, we identify the set $\ca{U} \triangleq \cup_{i \in [\ell]}\s{supp}(\f{v}^i)$ that represents the union of support of the unknown vectors. Note that $|\ca{U}|\le \ell k$ since each unknown vector is $k$-sparse.
Since Jennrich's algorithm is not applicable for tensors of order more than 3, we will simply search over all $O((\ell k)^{\ell k})$ possibilities in order to compute the unique CP decomposition of an input tensor. Unfortunately though, if the sufficiency conditions (Lemma \ref{lem:unique_cp}) for unique CP decomposition is not met, there can be multiple solutions and we will not be able to detect the correct one. This is the reason why it is not possible to completely generalize Algorithm \ref{algo:s-lin-indep-supp-rec} by constructing multiple tensors of higher order.
 To circumvent this issue, Algorithm \ref{algo:s-indep-supp-rec} constructs only a single tensor $\ca A$ of rank $\ell$ 
and  order $w =\lceil \frac{2\ell-1}{r-1} \rceil$  by setting its $(i_1, \ldots, i_w)$-th entry to $|\Su((i_1, \ldots, i_w), \f{1}_w)|$ for every ordered tuple $(i_1, \ldots, i_w) \in [n]^w$.  By using Theorem \ref{lem:unique_cp}, the recovery of the supports of the unknown vectors via brute force CP decomposition of the constructed tensor is unique if the support matrix has Kruskal rank $r$.

All the above described algorithms require Lemma~\ref{lem:compute-SuCa} that for any $s >1$ computes $|\Su(C, \f{a})|$ for every $s$-sized ordered tuple of indices $C$, and any $\f{a} \in \{0,1\}^{s}$ using few label queries. 
The key technical ingredient in Lemma~\ref{lem:compute-SuCa} is to estimate $\s{nzcount}(\f x)$ - the number of unknown vectors that have a non-zero inner product with $\f{x}$. Note that even this simple task is non-trivial in the mixture model and more so with noisy label queries.

\subsection{Our Results}\label{sec:results}


The MLC results below explicitly show the scaling  with the noise, whereas  all of the MLR query results below are valid with $$\s{SNR} = O(\ell^2\max_{i \in [\ell]}\left|\left|\f{v}^i\right|\right|_2^2/\delta^2).$$ 
In our first result, we recover the support of the unknown vectors with small number of label queries provided the support matrix of $\ca{V}$ is $p$-identifiable.
\begin{theorem}\label{lem:t-iden-supp-rec}
Let $\mathcal{V}$ be a set of $\ell$ unknown vectors in $\R^n$ such that $\mathcal{V}$ is $p$-indentifiable. Then, Algorithm~\ref{algo:t-iden-supp-rec} recovers the support of all the unknown vectors in $\mathcal{V}$ 
with probability at least $1-O\left( 1/n \right)$ using either (1) $O\left( \frac{\ell^{3} (\ell k)^{p+2} \log(\ell k n) \log n }{ (1-2\eta)^2} \right)$ \MLC~queries or 
(2) $O(\ell^{3} (\ell k)^{p+2} \log(\ell k n) \log n)$ 
\MLR~queries. 
\end{theorem}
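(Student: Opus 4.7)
The plan is to reduce the theorem to the concentration guarantee of Lemma~\ref{lem:compute-SuCa}, then iteratively peel off unknown vectors using the $p$-identifiability structure. As a first step, I would invoke Lemma~\ref{lem:compute-SuCa} to compute estimates $\widehat{\s{occ}}(C, \f{a})$ of $|\Su(C,\f{a})|$ for every ordered tuple $C\subseteq[n]$ of size at most $p+1$ and every $\f{a}\in\{0,1\}^{|C|}$. Since $\Su(C,\f{a})$ is an integer bounded by $\ell$, rounding these estimates to the nearest integer yields the exact value with high probability, provided the additive error is strictly less than $1/2$. The number of tuples we need is $O(n^{p+1}\cdot 2^{p+1})$, so a union bound combined with the per-query cost from Lemma~\ref{lem:compute-SuCa} will drive the total query counts claimed for the MLC and MLR settings (the former inheriting the $(1-2\eta)^{-2}$ noise factor).

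Assuming all such $|\Su(C,\f{a})|$ are known exactly, the peeling step works as follows. By $p$-identifiability of $\f{B}$ (the column-deduplicated support matrix), there exist some column $\f{B}_i$, a set $S$ of at most $p$ indices, and a string $\f{a}\in\{0,1\}^{|S|}$ such that $\Su(S,\f{a})$ contains exactly the (possibly repeated) unknown vectors whose support equals column $\f{B}_i$. I would scan all such $(S,\f{a})$ with $|S|\le p$ and identify those where $|\Su(S,\f{a})|\ge 1$ and the underlying vectors are all identical (which can be certified by the behavior of $|\Su(S\cup\{j\},(\f{a},1))|$ for each $j\in[n]\setminus S$: it must equal $|\Su(S,\f{a})|$ when $j\in\supp(\f{v})$ and $0$ otherwise). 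For any such identified support $\f{v}$, I then read off $\supp(\f{v})$ from the set $\{j\in[n]\setminus S:|\Su(S\cup\{j\},(\f{a},1))|=|\Su(S,\f{a})|\}\cup S_{\f{a}}$, where $S_{\f{a}}$ denotes the indices of $S$ where $\f{a}$ is $1$.

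After identifying one distinct support, I would then ``delete'' one copy of it by updating $|\Su(C,\f{a})|\leftarrow|\Su(C,\f{a})|-\mathbf{1}[\supp(\f{v})|_C=\f{a}]$ for every $(C,\f{a})$ with $|C|\le p+1$. The definition of $p$-identifiability guarantees that the residual support matrix (with the removed column deleted) is again $p$-identifiable, so the peeling can be repeated at most $\ell$ times, recovering all distinct supports; the multiplicities are recovered directly from the initial $|\Su(S,\f{a})|$ values. The correctness of this procedure is purely combinatorial once the $\Su$ values are exact.

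The main obstacle, and the reason the query count is nontrivial, is ensuring exact recovery of all $O(n^{p+1})$ counts $|\Su(C,\f{a})|$ simultaneously with high probability---this is where Lemma~\ref{lem:compute-SuCa} is crucial, and where the $\ell^3(\ell k)^{p+2}\log(\ell kn)$ factor arises: the lemma's per-tuple query cost is polynomial in $\ell$ and $k$, the union bound over $O(n^{p+1})$ tuples contributes the $\log n$ factor, the rounding-to-integer step contributes the other $\log n$ factor (to bring the additive error below $1/2$ with failure probability $1/\mathrm{poly}(n)$), and the MLC case inherits the usual $(1-2\eta)^{-2}$ amplification from noisy $\pm 1$ labels. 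Putting these pieces together yields precisely the query complexities stated in the theorem, with overall failure probability $O(1/n)$ after a final union bound over the precomputation and the (at most $\ell$) peeling rounds.
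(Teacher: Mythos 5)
Your proposal is correct and follows essentially the same route as the paper: precompute all $|\Su(C,\f{a})|$ for tuples of size up to $p+1$ via Lemma~\ref{lem:compute-SuCa}, certify a $p$-identifiable column by checking that $|\Su(S\cup\{j\},(\f{a},1))|\in\{0,|\Su(S,\f{a})|\}$ for all $j$, read off its support, subtract its contribution from the $\Su$ counts, and recurse, with the query complexity coming entirely from the lemma. The only cosmetic difference is that the paper peels all copies of a recovered support at once (recording the multiplicity $w=|\Su(C,\f{a})|$) rather than one copy at a time, which does not affect correctness or the stated bounds.
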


In fact, all binary matrices with distinct columns are  $p$-identifiable for some  sufficiently large $p$.

\begin{theorem}\label{lem:suff-t}
Any $n \times \ell$, (with $n > \ell$) binary matrix with all distinct columns is  $p$-identifiable for some $p \le \log \ell$.
\end{theorem}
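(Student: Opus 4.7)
The plan is to reduce the statement to a \emph{single-column lemma}: every $n \times m$ binary matrix with $m \ge 1$ pairwise distinct columns contains at least one column that is $\lfloor \log_2 m \rfloor$-identifiable. Once this is proved, Definition~\ref{def:iden} is met for the whole matrix by peeling: for $i = 1, 2, \ldots, \ell$, apply the single-column lemma to the submatrix on columns $[\ell]\setminus\{\sigma(1),\ldots,\sigma(i-1)\}$, which still has pairwise distinct columns since a subfamily of distinct vectors is distinct, and take $\sigma(i)$ to be the index of a column certified by the lemma. Each such column is $\lfloor \log_2 (\ell - i + 1)\rfloor$-identifiable, hence $\lfloor \log_2 \ell \rfloor$-identifiable, so the common value $p = \lfloor \log_2 \ell \rfloor \le \log \ell$ works.

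To prove the single-column lemma I would run a binary-search / halving procedure. Set $T_0 := [m]$. While $|T_t| \ge 2$, since the columns indexed by $T_t$ are still pairwise distinct there is some row $r_t \in [n]$ on which they do not all agree. Split $T_t = T_t^0 \sqcup T_t^1$ according to the bit at row $r_t$, pick $b_t \in \{0,1\}$ so that $T_{t+1} := T_t^{b_t}$ is the smaller part, and note $|T_{t+1}| \le \lfloor |T_t|/2 \rfloor$. The recurrence $f(m) = 1 + f(\lfloor m/2\rfloor)$ with $f(1)=0$ solves to $f(m) = \lfloor \log_2 m \rfloor$, so the procedure terminates in at most $t^\star \le \lfloor \log_2 m\rfloor$ steps with $T_{t^\star} = \{i\}$ for some column index $i$.

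It remains to verify that $S := \{r_0, \ldots, r_{t^\star - 1}\}$ and $\f{a} := (b_0, \ldots, b_{t^\star - 1}) \in \{0,1\}^{t^\star}$ witness $p$-identifiability of column $i$ in the sense of Definition~\ref{def:iden}. By construction $i$ survives every filter, so $\f{A}_i|_S = \f{a}$. Conversely, any $j \ne i$ was discarded at some step $t < t^\star$, which means the bit of $\f{A}_j$ at row $r_t$ equals $1 - b_t$; therefore $\f{A}_j|_S \neq \f{a}$. Since $|S| = t^\star \le \lfloor \log_2 m \rfloor$, this completes the lemma.

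I do not expect a serious obstacle; the argument is essentially the standard observation that $\ell$ distinct binary strings can be separated by a decision tree of depth $\lceil \log_2 \ell\rceil$, tuned to give depth $\lfloor \log_2 \ell\rfloor$ to an individual leaf by always following the shorter branch. The only delicate points are (i) always choosing the \emph{smaller} side of the split so that the halving recurrence holds with a floor rather than a ceiling, and (ii) maintaining the invariant that after column deletions the remaining columns are still distinct, which is what lets the single-column lemma be invoked $\ell$ times in succession.
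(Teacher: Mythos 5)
Your proof is correct and follows essentially the same route as the paper's: at each stage pick a coordinate on which the surviving columns disagree, keep the smaller half so the candidate set at least halves, isolate one column after at most $\log\ell$ steps, and then peel that column off and repeat on the remaining (still distinct) columns. Your write-up is slightly more careful than the paper's about the floor in the halving recurrence and about explicitly verifying that the collected rows and bits form a valid identifying witness, but the underlying argument is the same.
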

\begin{proof}
Suppose $\f{A}$ is the said matrix. Since all the columns of $\f{A}$ are distinct, there must exist an index $i \in [n]$ which is not the same for all columns in $\f{A}$. We must have $\left|\s{occ}((i), a)\right| \le \ell/2$ for some $a \in \{0,1\}$. Subsequently, we consider the columns of $\f{A}$ indexed by the set $\s{occ}((i), a)$ and can repeat the same step. Evidently, there must exist an index $j \in [n]$ such that $\left|\s{occ}((i),\f{a})\right| \le \ell/4$ for some $\f{a} \in \{0,1\}^2$. Clearly, we can repeat this step at most $\log \ell$ times to find $C \subset [n]$ and $\f{a}\in \{0,1\}^{\le \log \ell}$ such that $\left|\s{occ}(C,\f{a})\right| = 1$ and therefore the column in $\s{occ}(C,\f{a})$ is $p$-identifiable. We denote the index of this column  as $\sigma(1)$ and form the sub-matrix $\f{A}^1$ by deleting the column. Again, $\f{A}^1$ has $\ell-1$ distinct columns and by repeating similar steps, $\f{A}^1$ has a column that is $\log(\ell-1)$ identifiable. More generally, $\f{A}^i$ formed by deleting the columns indexed in the set $\{\sigma(1),\sigma(2),\dots,\sigma(i-1)\}$, has a column that is $\log(\ell-i)$ identifiable with the index (in $\f{A}$) of the column having the unique sub-string (in $\f{A}^i$) denoted by $\sigma(i)$. Thus the lemma is proved.
\end{proof}

Thus, we have the following corollary characterizing the unconditional worst-case guarantees for support recovery:
\begin{coro}\label{coro:imp}
Let $\mathcal{V}$ be a set of $\ell$ unknown vectors in $\R^n$. 
Then, Algorithm~\ref{algo:t-iden-supp-rec} recovers the support of all the unknown vectors in $\mathcal{V}$ 
with probability at least $1-O\left(1/n\right)$ using either (1) $O\left( \frac{\ell^{3} (\ell k)^{\log \ell+2} \log(\ell k n) \log n}{ (1-2\eta)^2} \right)$ \MLC~queries or (2) $O(\ell^{3} (\ell k)^{\log \ell+2} \log(\ell k n) \log n)$ 
\MLR~queries. 
\end{coro}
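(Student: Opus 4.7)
The plan is simply to combine Theorem~\ref{lem:suff-t} with Theorem~\ref{lem:t-iden-supp-rec}. Let $\f{A}\in\{0,1\}^{n\times \ell}$ be the support matrix of $\mathcal V$ and let $\f{B}\in\{0,1\}^{n\times \ell'}$ be the matrix obtained from $\f{A}$ by deleting duplicate columns, so $\ell'\le \ell$ and all columns of $\f{B}$ are distinct. By the definition of $p$-identifiable for a set of vectors (Definition~\ref{def:iden}), $\mathcal V$ is $p$-identifiable precisely when $\f{B}$ is $p$-identifiable.

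Since $\f{B}$ has $n>\ell \ge \ell'$ rows and $\ell'$ distinct columns, Theorem~\ref{lem:suff-t} applies to $\f{B}$ and yields that $\f{B}$ is $p$-identifiable for some $p\le \log \ell' \le \log \ell$. Hence $\mathcal V$ itself is $p$-identifiable with $p\le \log \ell$.

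We may therefore invoke Theorem~\ref{lem:t-iden-supp-rec} with this value of $p$. Substituting the bound $p\le \log \ell$ into the two query complexities given there, the $\MLC$ bound becomes
\[
O\!\left(\frac{\ell^{3}(\ell k)^{p+2}\log(\ell k n)\log n}{(1-2\eta)^2}\right)
\;\le\; O\!\left(\frac{\ell^{3}(\ell k)^{\log \ell+2}\log(\ell k n)\log n}{(1-2\eta)^2}\right),
\]
and analogously the $\MLR$ bound becomes $O\!\left(\ell^{3}(\ell k)^{\log \ell+2}\log(\ell k n)\log n\right)$, while the success probability guarantee $1-O(1/n)$ is inherited directly from Theorem~\ref{lem:t-iden-supp-rec}. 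The only subtlety to keep in mind is that Algorithm~\ref{algo:t-iden-supp-rec}, being based on the $p$-identifiable structure of the deduplicated matrix $\f{B}$, recovers one support per distinct column; since identical unknown vectors share a support by definition, this recovers the support of every $\f v^i\in\mathcal V$, which is all that the statement asks for.

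I do not foresee a real obstacle: the corollary is a direct chaining of the two preceding theorems. The only mildly nontrivial point is handling duplicate supports cleanly via the reduction to $\f{B}$, which is already absorbed into Definition~\ref{def:iden} and does not inflate the parameter $p$ beyond $\log \ell$.
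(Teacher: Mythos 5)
Your proposal is correct and follows exactly the paper's route: apply Theorem~\ref{lem:suff-t} to the deduplicated support matrix to conclude $p\le\log\ell$, then substitute into Theorem~\ref{lem:t-iden-supp-rec}. The paper states this in one line; your version merely spells out the deduplication step and the monotone substitution of $p\le\log\ell$ into the query bounds, both of which are fine.
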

\begin{proof}
The proof follows from the fact that any set $\ca{V}$ of $\ell$ unknown vectors in $\bb{R}^n$ must have $p$-identifiable supports for $p \le \log \ell$.
\end{proof}

Under some assumptions on the unknown support, e.g.flip-independence, we have better results. 

\begin{theorem}\label{lem:s-lin-indep-supp-rec}
Let $\mathcal{V}$ be a set of $\ell$ unknown vectors in $\R^n$ such that $\ca{V}$ is flip-independent.  
Then, Algorithm~\ref{algo:s-lin-indep-supp-rec} recovers the support of all the unknown vectors in $\mathcal{V}$ 
with probability at least $1-O\left(1/n\right)$ using either (1) $O\left( \frac{\ell^{3} (\ell k)^{4} \log(\ell k n) \log n}{ (1-2\eta)^2} \right)$ \MLC~queries or
(2) $O(\ell^{3} (\ell k)^{4} \log(\ell k n) \log n)$ 
\MLR~queries. 
\end{theorem}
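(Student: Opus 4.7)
The strategy follows the ``Support matrix is flip-independent'' paragraph in Section~\ref{subsec:tensor}, instantiating Algorithm~\ref{algo:s-lin-indep-supp-rec}. The plan is to (i) precompute the occurrence counts $|\Su(C,\f{a})|$ on short tuples using Lemma~\ref{lem:compute-SuCa}; (ii) use these counts to assemble, for every candidate flip set $\ca{F}\subseteq \ca{U}'$, an order-$3$ symmetric tensor whose latent components are the flipped supports; (iii) run Jennrich's algorithm on each tensor and accept any decomposition it returns; and (iv) flip back to read off the unknown supports. The first preparatory step is to locate $\ca{U}=\bigcup_{i\in[\ell]}\s{supp}(\f{v}^i)$ by applying Lemma~\ref{lem:compute-SuCa} with $s=1$ to every singleton tuple $(j)$: an index $j\in[n]$ lies in $\ca{U}$ iff $|\Su((j),1)|>0$. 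Setting $\ca{U}'=\ca{U}\cup\{t\}$ for any fixed $t\notin \ca{U}$ shrinks the effective alphabet from $n$ down to $|\ca{U}'|\le \ell k + 1$.

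Next, invoke Lemma~\ref{lem:compute-SuCa} with $s=3$ to obtain, with probability $1-O(1/n)$, the exact integer counts $|\Su(C,\f{a})|$ for every ordered triple $C\in(\ca{U}')^3$ and every $\f{a}\in\{0,1\}^3$. The pivotal observation is that for any $\ca{F}\subseteq\ca{U}'$, the $(i_1,i_2,i_3)$-entry of
\[
\ca{A}^{\ca{F}}=\sum_{i\in[\ell]}f(\ca{F},\s{supp}(\f{v}^i))^{\otimes 3}
\]
equals $|\Su((i_1,i_2,i_3),\f{b})|$, where $\f{b}_j=\f{1}[i_j\notin\ca{F}]$. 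Hence every one of the $2^{|\ca{U}'|}$ candidate tensors is assembled from the \emph{same} precomputed set of counts, requiring no fresh queries. Running Jennrich's algorithm on $\ca{A}^{\ca{F}}$ either returns a unique rank-$\ell$ decomposition (when the flipped supports $\{f(\ca{F},\s{supp}(\f{v}^i))\}_{i=1}^{\ell}$ are linearly independent) or declares failure. Flip-independence of $\ca{V}$ guarantees the existence of at least one success instance $\ca{F}^{\star}$. By uniqueness of Jennrich's decomposition on order-$3$ tensors with linearly independent factors, on any success the rank-one components are exactly the flipped supports, with positive integer weights encoding the multiplicities of duplicate supports (as discussed after Lemma~\ref{lem:unique_cp}); flipping back the coordinates in $\ca{F}$ therefore recovers the true supports.

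For the sample complexity, the precomputation of the $O((\ell k)^3)$ triples via Lemma~\ref{lem:compute-SuCa} with $s=3$ dominates every other cost and delivers the advertised $O(\ell^{3}(\ell k)^{4}\log(\ell kn)\log n)$ bound in the \MLR~setting (and its $(1-2\eta)^{-2}$-scaled counterpart for \MLC). A union bound over the $O((\ell k)^{3})$ count estimates keeps the overall failure probability at $O(1/n)$. The main obstacle is ensuring that every $|\Su(C,\f{a})|$ is reconstructed \emph{exactly}: a single erroneous tensor entry could simultaneously destroy the genuine decomposition at $\ca{F}^{\star}$ and fabricate spurious rank-$\ell$ structure at other $\ca{F}$, so the entire argument rests on Lemma~\ref{lem:compute-SuCa} producing exact integer counts with high probability. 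Once this is in hand, correctness is a clean appeal to the Jennrich/Kruskal uniqueness machinery, and the $2^{|\ca{U}'|}$ enumeration over candidate flip sets is purely computational and does not enter the sample-complexity bound.
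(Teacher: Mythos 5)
Your proposal is correct and follows essentially the same route as the paper's proof: precompute $|\Su(C,\f{a})|$ for triples via Lemma~\ref{lem:compute-SuCa}, assemble the tensors $\ca{A}^{\ca{F}}$ for $\ca{F}\subseteq\ca{U}'$ from those counts, run Jennrich's algorithm until it succeeds (flip-independence guaranteeing a success at some $\ca{F}^{\star}$), and flip back to read off the supports. Your added remark that \emph{any} successful run of Jennrich's algorithm, not only the one at $\ca{F}^{\star}$, yields the correct answer by uniqueness of the decomposition is a point the paper leaves implicit, but it does not change the argument.
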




We can also leverage the property of small Kruskal rank of the support matrix to show: 
\begin{theorem}\label{lem:s-indep-supp-rec}
Let $\mathcal{V}$ be a set of $\ell$ unknown vectors in $\R^n$ that has $r$-Kruskal rank support with $r \ge 2$. Let $w = \lceil \frac{2\ell-1}{r-1} \rceil$. 
Then, Algorithm~\ref{algo:s-indep-supp-rec} recovers the support of all the unknown vectors in $\mathcal{V}$ 
with probability at least $1-O\left(1/n\right)$ using either (1) $O\left( \frac{\ell^{3} (\ell k)^{w+1} \log(\ell k n) \log n}{ (1-2\eta)^2} \right)$ \MLC~queries or 
(2) $O(\ell^{3} (\ell k)^{w+1} \log(\ell k n) \log n)$ 
\MLR~queries. 
\end{theorem}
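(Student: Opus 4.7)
\textbf{Proof proposal for Theorem \ref{lem:s-indep-supp-rec}.}
The plan is to exploit the Kruskal-rank hypothesis to guarantee a unique CP decomposition of a single order-$w$ tensor assembled from the occurrence counts. First I would run a pre-processing step analogous to the one in Algorithm~\ref{algo:s-lin-indep-supp-rec} to identify the active coordinate set $\ca{U} \defeq \bigcup_{i\in[\ell]} \supp(\f{v}^i)$: each index $j\in[n]$ can be tested for membership in $\ca{U}$ by evaluating $|\Su((j),1)|$ via Lemma~\ref{lem:compute-SuCa}. Since $|\ca{U}|\le \ell k$, confining all subsequent tensor work to $\ca{U}$ shrinks the ambient dimension from $n$ to $\ell k$, which is what allows the brute-force CP step below to terminate.

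Next, I would invoke Lemma~\ref{lem:compute-SuCa} with $s=w$ and $\f{a}=\f{1}_w$ to populate the order-$w$ tensor $\ca{A}\in\bb{R}^{|\ca{U}|^w}$ whose $(i_1,\ldots,i_w)$-entry equals $|\Su((i_1,\ldots,i_w),\f{1}_w)|$. Letting $\f{s}^1,\ldots,\f{s}^m$ enumerate the distinct supports (restricted to $\ca{U}$) among $\f{v}^1,\ldots,\f{v}^{\ell}$ with multiplicities $c_1,\ldots,c_m\in\N$ summing to $\ell$, a direct count gives
\[
\ca{A} \;=\; \sum_{j=1}^{m} c_j\, \underbrace{\f{s}^j\otimes \cdots \otimes \f{s}^j}_{w \text{ times}}.
\]
By Definition~\ref{def:kruskal} the matrix with columns $\f{s}^1,\ldots,\f{s}^m$ has Kruskal rank $r$, and the choice $w=\lceil(2\ell-1)/(r-1)\rceil$ yields $w(r-1)\ge 2\ell-1\ge 2m-1$, i.e.\ $wr\ge 2m+(w-1)$. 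Lemma~\ref{lem:unique_cp} then certifies that the CP decomposition of $\ca{A}$ is unique up to permutation. As noted in the preliminaries, by absorbing each multiplicity via $\f{y}^j\defeq c_j^{1/w}\f{s}^j$ and using the fact that $\f{s}^j\in\{0,1\}^{|\ca{U}|}$ and $c_j\in[\ell]$, a brute-force enumeration over the $O(\ell^{\ell}\cdot 2^{\ell|\ca{U}|})$ candidate decompositions finds the unique valid one, from which each $\f{s}^j$ and its multiplicity $c_j$ are read off; lifting the restriction on $\ca{U}$ then gives $\supp(\f{v}^i)$ for every $i\in[\ell]$.

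The query budget is dominated by the cost of computing the $|\ca{U}|^w\le (\ell k)^w$ entries of $\ca{A}$: applying Lemma~\ref{lem:compute-SuCa} with this target and a union bound over all entries delivers $O(\ell^3(\ell k)^{w+1}\log(\ell k n)\log n/(1-2\eta)^2)$ MLC queries, with the analogous bound at the stated $\s{SNR}$ for MLR. The step I expect to be the main obstacle is not the tensor-decomposition argument itself---it follows mechanically from Lemma~\ref{lem:unique_cp} once the count identity is in place---but rather propagating the sampling error from Lemma~\ref{lem:compute-SuCa} through all $(\ell k)^w$ tensor entries so that each estimate lies within $1/2$ of its integer truth and can be rounded, since this is what drives both the $\log(\ell k n)$ and the $\ell^3$ factors in the stated query count. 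The computational cost of the brute-force decomposition is exponential in $\ell k$, but as the theorem concerns query rather than time complexity this is acceptable.
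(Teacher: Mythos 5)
Your proposal is correct and follows essentially the same route as the paper: build a single order-$w$ tensor from the counts $|\Su(\cdot,\f{1}_w)|$ supplied by Lemma~\ref{lem:compute-SuCa}, invoke Kruskal's condition (Lemma~\ref{lem:unique_cp}) with $w=\lceil(2\ell-1)/(r-1)\rceil$ to certify uniqueness of the CP decomposition, and recover the binary support vectors by brute-force enumeration over candidates supported in $\ca{U}$. Your explicit check that the number $m$ of distinct supports satisfies $m\le\ell$, so that $w(r-1)\ge 2\ell-1$ suffices, is a small point the paper glosses over, but otherwise the arguments coincide.
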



\paragraph{Discussion on Matrix Properties:} Note that $p$-identifiability (Definition \ref{def:iden}) is  a generalization of the separability conditions outlined by \cite{gandikota2020recovery} for support recovery. This generalization allows us to recover the supports of all the unknown vectors  \textit{in the worst-case without any assumptions} (Corollary \ref{coro:imp}).
The flip-independence (Definition \ref{def:flip}) and $r$-Kruskal rank support (Definition \ref{def:kruskal}) properties are used for the tensor-decomposition based support recovery algorithms and follow naturally from Lemma \ref{lem:unique_cp}. 
The flip-independence assumption is quite weak, however there do exist binary matrices that are not flip independent. For example 
\[M = \begin{bmatrix}
0&1&0&1\\
0&0&1&1\\
1&1&1&1\\
1&1&1&1
\end{bmatrix}\]
is not flip independent. 
The $r$-Kruskal rank support condition generalizes linear independence conditions considered in previous mixture model studies such as \cite{yi2016solving}. Note that this condition is always satisfied by the support vectors for some $r\ge 2$.
\textit{Essentially, we 1) provide algorithms for support recovery without any assumptions,  2) and  also provide significantly better guarantees under extremely mild assumptions.}

Although we have not optimized the run-time of our algorithms in this work, we report the relevant computational complexities below:

\begin{remark}[Computational Complexity]\label{rmk:comp}
Note that Algorithm \ref{algo:t-iden-supp-rec} has a computational complexity that is polynomial in the sparsity $k$, dimension $n$ and scales as $O(\ell^p)$ where $p\le \log \ell$. On the other hand Algorithms  \ref{algo:s-lin-indep-supp-rec}, \ref{algo:s-indep-supp-rec} has a computational complexity that scales exponentially with $k,\ell$ while remaining polynomial in the dimension $n$. For the special case when the support matrix is known to be full rank, Algorithm \ref{algo:s-indep-supp-rec} with $w=3$ is polynomial in all parameters (by using Algorithm \ref{algo:tensor} for the CP decomposition.) 
\end{remark}




\section{Detailed Proofs and Algorithms}\label{sec:detailed}


Recall the definition of $\Su(C, \f{a})$ - the number of unknown vectors whose supports have $\f{a} \in \{0,1\}^{|C|}$ as a substring in coordinates $C \subset [n]$. 
First, we observe that for any set  $\ca{T} \subseteq \{0,1\}^s$, we can compute $|\Su(C, \f{a})|$ for all $O(n^s)$ subsets of $s$ indices $C \subset [n]$ and $\f a \in \ca T$ using few MLC or MLR queries. 
\begin{lemma}\label{lem:compute-SuCa}
Let $\ca{T} \subseteq \{0,1\}^s$ be any set of binary vectors of length $s$. 
There exists an algorithm to compute $|\Su(C, \f{a})|$ for all $C \subset [n]$ of size $s$, and all $\f a \in \ca{T}$ with probability at least $1-1/n$ using either  $O(\ell^{3} (\ell k)^{s+1} \log(\ell k n) \log n / (1-2\eta)^2)$ \MLC~queries or 
 $O(\ell^{3} (\ell k)^{s+1} \log(\ell k n) \log n)$ 
\MLR~queries. 
\end{lemma}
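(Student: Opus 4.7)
}
The plan is to reduce the task to the primitive of computing the integer $\s{nzcount}(\f{x}):=|\{i\in[\ell]:\langle\f{x},\f{v}^{i}\rangle\ne 0\}|$ for a small and carefully chosen family of query vectors. I will take each such $\f{x}$ supported on a prescribed set $T\subseteq[n]$ with i.i.d.\ Gaussian weights on that support, so that almost surely $\langle\f{x},\f{v}^{i}\rangle=0$ iff $T\cap\supp{\f{v}^{i}}=\emptyset$; in that case $\s{nzcount}(\f{x})$ is exactly the set-theoretic count $E(T):=|\{i:T\cap\supp{\f{v}^{i}}\ne\emptyset\}|$, so the problem becomes one of learning $E$ on a carefully chosen small family of sets.

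Once every relevant $E(U)$ is known, the target counts $|\Su(C,\f{a})|$ follow by two rounds of inclusion-exclusion. Writing $M(S):=|\{i:S\subseteq\supp{\f{v}^{i}}\}|$, Mobius inversion on the Boolean lattice gives
$$M(S)=\sum_{\emptyset\ne U\subseteq S}(-1)^{|U|+1}E(U),$$
and for any $C$ of size $s$ with $\f{a}\in\{0,1\}^{s}$, splitting $C=C_{1}\sqcup C_{0}$ according to $\f{a}$,
$$|\Su(C,\f{a})|=\sum_{U\subseteq C_{0}}(-1)^{|U|}M(C_{1}\cup U).$$
Both formulas only need $E(U)$ for $|U|\le s$, and these are trivial unless $U$ lies inside the support-union $\ca{U}:=\bigcup_{i}\supp{\f{v}^{i}}$. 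I will learn $\ca{U}$ upfront by running $\s{nzcount}(\f{e}_{i})$ at each $i\in[n]$; since $|\ca{U}|\le\ell k$ it then suffices to compute $E(U)$ for $U\subseteq\ca{U}$ with $|U|\le s$.

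To compress the test family, I will use an $(s,\ell k)$-$\s{CFF}$ $\{\ca{H}_{1},\ldots,\ca{H}_{n}\}$ on an alphabet of size $m=O((\ell k)^{s+1}\log n)$ from Lemma \ref{lem:cffexist}, and set $T_{j}:=\{i\in[n]:j\in\ca{H}_{i}\}$. The cover-free property guarantees that for every $U\subseteq\ca{U}$ of size at most $s$ (padding to size $s$ with coordinates from $[n]\setminus\ca{U}$ if necessary, which has no effect on any inner product) there is some $j\in[m]$ with $U\subseteq T_{j}$ and $T_{j}\cap(\ca{U}\setminus U)=\emptyset$; for such a $j$, $T_{j}\cap\supp{\f{v}^{i}}=U\cap\supp{\f{v}^{i}}$ for every $i$, so a Gaussian $\f{x}$ supported on $T_{j}$ satisfies $\s{nzcount}(\f{x})=E(U)$. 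Each $\s{nzcount}(T_{j})$ is recovered from the empirical mean of the MLC response: when $T_{j}\cap\supp{\f{v}}=\emptyset$ the inner product is identically zero and the conditional response mean is $\pm(1-2\eta)$, while otherwise the nondegenerate Gaussian inner product has sign $\pm 1$ with equal probability and conditional mean $0$, giving $|\bb{E}[\text{response}]|=(1-2\eta)(1-\s{nzcount}(T_{j})/\ell)$; a Hoeffding bound then yields exact integer recovery with probability $1-\delta$ at $O(\ell^{2}\log(1/\delta)/(1-2\eta)^{2})$ MLC samples, and the MLR version uses thresholding of the response magnitude under the stated $\s{SNR}$ for $O(\ell^{2}\log(1/\delta))$ samples per test. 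Setting $\delta=1/(mn)$ and union-bounding over all $m+n$ tests yields failure probability $O(1/n)$ and total MLC query count $O(\ell^{3}(\ell k)^{s+1}\log(\ell kn)\log n/(1-2\eta)^{2})$, matching the claim; the MLR bound follows analogously without the noise factor.

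\emph{Main obstacle.} The most delicate piece is the MLR implementation of $\s{nzcount}$: unlike the clean $\pm(1-2\eta)$-vs-$0$ gap available in MLC, in MLR the response is Gaussian in both regimes and one must separate variance $\sigma^{2}$ from $\sigma^{2}+\Omega(\delta^{2}\|\f{x}\|_{2}^{2})$ while still respecting the $\s{SNR}$ normalization. Choosing the Gaussian scale of $\f{x}$ to exhibit a detectable variance gap and bootstrapping that gap into a Bernoulli estimator with the same Hoeffding guarantee is where the real work lies; the inclusion-exclusion and $\s{CFF}$ reductions are essentially bookkeeping once the subroutine is in place.
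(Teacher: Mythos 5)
Your overall architecture (reduce to $\s{nzcount}$ on CFF-structured queries, then recover $|\Su(C,\f a)|$ by two rounds of inclusion--exclusion over unions/intersections) matches the paper's proof, and your two inversion formulas are correct. But there is a concrete gap that breaks the claimed query complexity: you propose to "learn $\ca{U}$ upfront by running $\s{nzcount}(\f{e}_i)$ at each $i\in[n]$." That is at least $n$ distinct queries (each of which must additionally be repeated $\Omega(\ell^2\log n)$ times for the count to be exact), whereas the lemma's bound $O(\ell^3(\ell k)^{s+1}\log(\ell k n)\log n)$ depends on $n$ only through $\log$ factors. Knowing $\ca{U}$ is not optional in your scheme --- your CFF row-selection step needs $T_j\cap(\ca{U}\setminus U)=\emptyset$, which cannot be certified without $\ca{U}$ --- so this step cannot simply be dropped. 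The paper breaks this circularity with a separate RUFF-based subroutine (Lemma~\ref{lem:ruff}, Algorithm~\ref{algo:Si}): it queries only the $m=O(\ell^2k^2\log n)$ rows of a $(d,\ell k,0.5)$-RUFF matrix and decodes all $n$ values $|\Su((i),1)|$ simultaneously from those $m$ counts via the robust union-free property. This group-testing-style compression is the missing idea, and it is precisely what removes the polynomial dependence on $n$.

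A secondary point needing care is your \MLC{} estimator. You assert that for a vector $\f v$ with $T_j\cap\supp{\f v}\neq\emptyset$ the response has "sign $\pm1$ with equal probability and conditional mean $0$." That is only true over fresh draws of the Gaussian query; if the query $\f x$ is fixed and repeated (which is how one normally gets i.i.d.\ samples for Hoeffding), each such $\f v$ contributes a \emph{deterministic} $\pm(1-2\eta)$ to the mean, and e.g.\ $\s{zcount}=1$ with the two nonzero signs opposite is indistinguishable in expectation from $\s{zcount}=3$ when $\ell=3$. You must either redraw the Gaussian weights independently at every repetition and argue symmetry, or do what the paper does: query both $\f x$ and $-\f x$ and average, so the nonzero-sign terms cancel identically and the mean is exactly $(1-2\eta)\,\s{zcount}(\f x)/\ell$. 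Your use of a single continuously-distributed query with an almost-sure nondegeneracy argument (in place of the paper's $\ell+1$ weightings plus pigeonhole) is a legitimate simplification, and your identification of the \MLR{} variance-separation step as delicate is accurate --- the paper resolves it by counting responses in an interval $[-a,a]$ with $a=\sigma/2$ and $\gamma=\Theta(\ell\sigma/\delta)$ --- but as written the proposal does not establish the lemma within the stated query budget.
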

Lemma~\ref{lem:compute-SuCa} (proved in Section~\ref{sec:compute-SuCa}) is an integral and non-trivial component of the proofs of all our main Theorems. In each of the sub-sections below, we go through each of them.

\subsection{Recovery of $p$-identifiable support matrix }\label{subsec:p_identifiable}
In this section we present an algorithm for support recovery of all the $\ell$ unknown vectors $\ca{V}\equiv \{\f v^1, \ldots, \f v^\ell\}$  when $\ca{V}$ is $p$-identifiable. 
In particular, we show that if $\mathcal{V}$ is $p$-identifiable, then computing $|\Su(C, \f{a})|$ for every subset of $p$ and $p+1$ indices is sufficient to recover the supports. 
\begin{proof}[Proof of Theorem~\ref{lem:t-iden-supp-rec}]

The proof follows from the observation that for any subset of $p$ indices $C \subset [n]$, index $j \in [n] \setminus C$ and $\f{a} \in \{0,1\}^p$, 
$|\Su(C, \f{a})| = |\Su(C\cup\{j\}, (\f{a}, 1))| + |\Su(C\cup\{j\}, (\f{a}, 0))|$. Therefore if one of the terms in the RHS is $0$ for all $j \in [n] \setminus C$, then all the vectors in $\Su(C, \f{a})$ share the same support. 

Also, if some two vectors $\f{u}, \f{v} \in \Su(C, \f{a})$ do not have the same support, then there will exist at least one index $j \in [n] \setminus C$ such that $ \f{u} \in \Su(C\cup\{j\}, (\f{a}, 1))|$ and $ \f{v} \in \Su(C\cup\{j\}, (\f{a}, 0))$ or the other way round, and therefore $|\Su(C\cup\{j\}, (\f{a}, 1))| \not \in \{0,|\Su(C, \f{a})|\} $. 
Algorithm~\ref{algo:t-iden-supp-rec} precisely checks for this condition. The existence of some vector $\f{v} \in \ca{V}$ ($p$-identifiable column), a subset of indices $C \subset [n]$ of size $p$, and a binary sub-string $\f{b}\in \{0,1\}^{\le p}$ follows from the fact that $\ca{V}$ is $p$-identifiable.
Let us denote the subset of unknown  vectors with distinct support in $\ca{V}$ by $\ca{V}^1$.

Once we recover the $p$-identifiable column of $\ca{V}^1$, we mark it as $\f{u}^{1}$ and remove it from the set (if there are multiple $p$-identifiable columns, we arbitrarily choose one of them). 
Subsequently,  we can modify the $\left|\Su{(\cdot)}\right|$ values for all $S \subseteq [n],|S|\in \{p,p+1\}$ and  $\f{t} \in \{0,1\}^{p} \cup \{0,1\}^{p+1} $ as follows:
\begin{align}\label{eq:comp_occ2}
    &\left|\Su^2(S, \f{t})\right| \triangleq \left|\Su(S, \f{t})\right| 
    - \left|\Su(C, \f{b})\right|\times\mathds{1}[ \supp{\f{u}^{1}}|_S = \f{t}].
\end{align}
Notice that, Equation~\ref{eq:comp_occ2} computes 
$
    \left|\Su^2(S, \f{t})\right| = \left|\{ \f{v}^i \in \mathcal{V}^2 \mid \supp{\f{v}^i}|_S = \f{t} \}\right|
$
where $\mathcal{V}^2$ is formed by deleting all copies of $\f{u}^{1}$ from $\ca{V}$. Since $\ca{V}^1$ is $p$-identifiable, there exists a $p$-identifiable column in $\ca{V}^1 \setminus \{\f{u}^{1}\}$ as well which we can recover. More generally for $q>2$, if $\f{u}^{q-1}$ is the $p$-identifiable column with the unique binary sub-string $\f{b}^{q-1}$ corresponding to the set of indices $C^{q-1}$, we will have for all $S \subseteq [n],|S|\in \{p,p+1\}$ and  $\f{t} \in \{0,1\}^{p} \cup \{0,1\}^{p+1} $
\begin{align*}
    \left|\Su^q(S, \f{t})\right| \triangleq \left|\Su^{q-1}(S, \f{t})\right| 
    - \left|\Su^{q-1}(C^{q-1}, \f{b}^{q-1})\right|\times\mathds{1}[ \supp{\f{u}^{q-1}}|_S = \f{t}] 
\end{align*}    
 implying 
 $\left|\Su^q(S, \f{t})\right| = \left|\{ \f{v}^i \in \mathcal{V}^q \mid \supp{\f{v}^i}|_S = \f{t} \}\right|
$
where $\mathcal{V}^q$ is formed deleting all copies of $\f{u}^{1},\f{u}^{2},\dots,\f{u}^{q-1}$ from $\ca{V}$. Applying these steps recursively and repeatedly using the property that $\ca{V}$ is $p$-identifiable, we can recover all the vectors present in $\ca{V}$. 

Algorithm~\ref{algo:t-iden-supp-rec} requires the values of $|\Su(C, \f a)|$, and $|\Su(\tilde{C}, \tilde{\f a})|$ for every $p$ and $p+1$ sized subset of indices $C, \tilde{C} \subset [n]$, and every $\f{a} \in \{0,1\}^p$, $\tilde{\f{a}} \in \{0,1\}^{p+1}$. 
Using Lemma~\ref{lem:compute-SuCa}, 
we can compute 
all these values using $O(\ell^{3} (\ell k)^{p+2} \log(\ell k n) \log n / (1-2\eta)^2)$ \MLC~queries or $O(\ell^{3} (\ell k)^{p+2} \log(\ell k n) \log n)$ \MLR~queries with probability at least $1 - O(n^{-1})$.
\end{proof}

\subsection{Recovery of flip-independent support matrix}\label{subsec:flip-independent}

\begin{algorithm}[t]
\caption{\textsc{Recover $p$-identifiable Supports} \label{algo:t-iden-supp-rec}}
\begin{algorithmic}[1]
\REQUIRE $|\Su(C, \f a)|$ for every $C \subset [n]$, $|C| = t, \; t \in\{p, p+1\}$, and every $\f{a} \in \{0,1\}^{p} \cup \{0,1\}^{p+1}$.
\STATE Set $\text{count} = 1,i = 1 $.
\WHILE{$\text{count}\le \ell$}
    \IF{$|\Su(C, \f a)| = w$, and $|\Su(C\cup\{j\}, (\f{a}, 1))|\in \{0,w\}$ for all $j \in [n]\setminus C$}
        \STATE Set $\supp{\f{u}^{i}}|_C =  \f{a}$\\
        \STATE For every $j \in [n]\setminus C$, set $\supp{\f{u}^{i}}|_j = b$, where $|\Su(C\cup\{j\}, ({\f a}, b))| = w$.
        \STATE Set $\s{Multiplicity}^i = w$.
        \STATE For all $\f{t}\in \{0,1\}^{p} \cup \{0,1\}^{p+1},S \subseteq [n]$ such that $|S|\in \{p, p+1\}$, update
\begin{align*}
    \left|\Su(S, \f{t})\right| \leftarrow \left|\Su(S, \f{t})\right| 
    - \left|\Su(C, \f{a})\right|\times\mathds{1}[ \supp{\f{u}^{i}}|_S = \f{t}]
\end{align*}
        \STATE $\text{count} = \text{count} + w$.
        \STATE $i=i + 1$.
    \ENDIF
\ENDWHILE
\STATE Return $\s{Multiplicity}^j$ copies of $\supp{\f{u}^{j}}$ for all $j < i$.
\end{algorithmic}
\end{algorithm}

\begin{algorithm}[htbp!]
\caption{\textsc{Recover flip-independent Supports}\label{algo:s-lin-indep-supp-rec}}
\begin{algorithmic}[1]
\REQUIRE $|\Su(C, \f{a})|$ for every $C \subset [n]$, such that $|C| = 3$, and all $\f{a} \in \{0,1\}^3$. $|\Su(i, 1)|$ for all $i\in [n]$.

\STATE Set $\ca{U} = \{i \in [n]:|\Su(i, 1)| \neq 0\}$ and  $\ca{U}' = \ca{U}\cup \{t\}$ where $t \in [n] \setminus \ca{U}$.

\FOR{each $\ca{F} \subset \ca{U}'$}
    \STATE Construct tensor $\ca{A}^{\ca F}$ as follows: 
    \FOR{every $(i_1, i_2, i_3) \in [n]^3$}
        \STATE Set $\ca{A}^{\ca F}_{(i_1, i_2, i_3)} = |\Su((i_1, i_2, i_3), (a_{i1}, a_{i2}, a_{i3}))|$,\\
        where $a_{ij} =0$ if $i_j \in \ca{F}$ and $1$ otherwise.
    \ENDFOR
    \IF{$\s{Jenerich}(\ca{A}^{\ca F})$ (Algorithm \ref{algo:tensor} with input $\ca{A}^{\ca{F}}$) succeeds:}
        \STATE Let $\ca{A}^{\ca F} = \sum_{i=1}^R \lambda_i \f{a}^i \otimes \f{a}^i \otimes \f{a}^i$ be the tensor decomposition of $\ca{A}$ such that $\f{a}^i \in \{0,1\}^n$.
        \STATE For all $i \in [R]$, modify $\f{a}^i$ by flipping  entries in $\ca{F}$.
        \STATE Return  $\lambda_i$ columns with modified $\f{a}^i$, $\forall i \in [R]$.
        \STATE {\bf break}
    \ENDIF
\ENDFOR
\end{algorithmic}
\end{algorithm}

In this section, we present an algorithm that recovers the support of all the $\ell$ unknown vectors in $\ca{V}$ provided $\ca{V}$ is flip-independent .

\begin{proof}[Proof of Theorem~\ref{lem:s-lin-indep-supp-rec}]
The query complexity of the algorithm follows from Lemma~\ref{lem:compute-SuCa}. For any subset $C$ of $3$ indices, with probability  $1 - O(1/n)$, we can compute $|\Su{(C, \cdot)}|$ 
using $O(\ell^{3} (\ell k)^{4} \log(\ell k n) \log n / (1-2\eta)^2)$ \MLC~queries or 
$O(\ell^{3} (\ell k)^{4} \log(\ell k n) \log n)$ 
\MLR~queries.

For every subset $\ca F \subseteq [n]$, we construct the tensor $\ca{A}^{\ca F}$ as follows:
$$\ca{A}^{\ca F}_{(i_1, i_2, i_3)} = |\Su((i_1, i_2, i_3), (a_{i1}, a_{i2}, a_{i3}))|,$$
for all $(i_1,i_2,i_3) \in [n]^3$
        where $a_{ij} =0$ if $i_j \in \ca{F}$ and $1$ otherwise.
We then run Jenerich's algorithm on each ${\ca A}^{\ca F}$. Observe that for any binary vector $\f b \in \{0,1\}^n$, the $(i_1, i_2, i_3)$-th entry of the rank-\texttt{1} tensor $\f b \otimes \f b \otimes \f b$ is $1$ if $b_{i_1}=b_{i_2}=b_{i_3}=1$, and $0$ otherwise. Therefore, the tensor ${\ca A}^{\ca F}$ can be decomposed as $\ca{A}^{\ca F} = \sum_{i=1}^R \lambda_i \f{a}^i \otimes \f{a}^i \otimes \f{a}^i$, where the  vectors $\f{a}^i \in \{0,1\}^n$, $i\in R$ are the support vectors of the unknown vectors that are flipped at indices in $\ca F$ with multiplicity $\lambda_i$.

Now if the support matrix of the unknown vectors is flip-independent, then there exists a subset of rows indexed by some $\ca{F}^\star \subseteq [n]$ such that flipping the entries of those rows results in a modified support matrix with all its distinct columns being linearly independent. 
Since the all zero rows of the support matrix $\f{A}$ are linearly independent (flipped or not), we can search for $\ca{F}^\star$ as a subset of $\ca{U}'$. Since,  $|\ca{U}'|\le \ell k+1$, this step improves the search space for $\ca{F}^\star$ from $O(2^n)$ to $O(2^{\ell k})$.

Therefore, Jenerich's algorithm on input ${\ca A}^{\ca F^\star}$ is guaranteed to succeed and returns the decomposition $\ca{A}^{\ca F^\star} = \sum_{i=1}^R \lambda_i \f{a}^i \otimes \f{a}^i \otimes \f{a}^i$ as the sum of $R$ rank-one tensors, where, $\f{a}^i \in \{0,1\}^n$, $i\in [R]$ are modified support vectors with multiplicity $\lambda_i$.
Subsequently, we can again flip the entries of the recovered vectors indexed by $\ca{F}^\star$
to return the original support vectors. 
\end{proof}

\subsection{Recovery of $r$-Kruskal rank supports}\label{subsec:r-kruskal}
\begin{algorithm}[htbp!]
\caption{\textsc{Recover $r$-Kruskal rank Supports}\label{algo:s-indep-supp-rec}}
\begin{algorithmic}[1]
\STATE Let $w$ be smallest integer such that $w \cdot (r-1) \ge 2\ell-1$.
\REQUIRE $|\Su(C, \f{1}_w)|$ for every $C \subset [n]$ with $|C| = w$. $|\Su((i),1)|$ for all $i\in [n]$.
\STATE Set $\ca{U} \triangleq \{i \in [n]:|\Su((i),1)|\neq 0\}$.
\STATE Construct tensor $\ca{A}$ as follows: 
\FOR{every $(i_1, \ldots, i_w) \in [n]^w$}
    \STATE Set $\ca{A}_{(i_1, \ldots, i_w)} = |\Su((i_1, \ldots, i_w), \f{1}_w
    )|$.
\ENDFOR
\FOR{every $(\f{b}^1,\f{b}^2 \ldots, \f{b}^{\ell}) \in \{0,1\}^n$ satisfying $\s{supp}(\f{b}^i)\subseteq \ca{U}$}
    \IF{$\ca{A}=\sum_{i=1}^{\ell}\f{b}^i\otimes\f{b}^i\dots\otimes \f{b}^i$ ($w$ times)} 
    \STATE Set $(\f{b}^1,\f{b}^2 \ldots, \f{b}^{\ell})$ to be the CP decomposition of $\ca{A}$ and Break
    \ENDIF
\ENDFOR
\STATE Return CP decomposition of $\ca{A}$
\end{algorithmic}
\end{algorithm}

In this section, we present an algorithm that recovers the support of all the $\ell$ unknown vectors provided they have $r$-Kruskal rank supports. 
Recall that for any set of $w$ indices $C \subset [n]$, $\Su(C, \f{1}_w)$ denotes the set of unknown vectors that are supported on all indices in $C$.

\begin{proof}[Proof of Theorem~\ref{lem:s-indep-supp-rec}]
To recover the supports we first construct the following order $w$ tensor: 
$
\ca{A}_{(i_1, \ldots, i_w)} = |\Su((i_1, \ldots, i_w), \f{1}_w)|,
\text{ for } (i_1, \ldots, i_w) \in [n]^w.
$
Observe that the tensor $\ca{A}$ can be written as the sum of $\ell'$ ($\ell'<\ell$) rank one tensors 
\begin{align}\label{eq:decomp}
\ca{A} = \sum_{i=1}^{\ell'}\lambda^i \underbrace{\supp{\f{v}^i} \otimes \ldots \otimes \supp{\f{v}^i}}_{w\text{-times}}.
\end{align}
where $\f{v}^1,\f{v}^2,\dots,\f{v}^{\ell'}$ are the unknown vectors with distinct supports in $\ca{V}$ with $\lambda^i$ being the multiplicity of $\supp{\f{v}^i}$.
Since the support matrix $\f{A}$ of $\ca{V}$ has $r$-Kruskal rank, for any $w$ such that $w \cdot (r-1) \ge 2\ell-1$, the decomposition of Eq.~\eqref{eq:decomp} is unique (Lemma~\ref{lem:unique_cp}).
 Notice that by a pre-processing step, we compute $\ca{U}\triangleq \{i \in [n]:|\Su((i),1)|\neq 0\}$ to be the union of the supports of the unknown vectors. Since we know that the underlying vectors of the tensor that we construct are binary, we can simply search exhaustively over all the possibilities ($O((\ell k)^{\ell k})$ of them (Steps 7-10) to find the unique CP decomposition of the tensor $\ca{A}$. For the special case when $w=3$, Jennrich's algorithm (Algorithm \ref{algo:tensor}) can be used to efficiently compute the unique CP decomposition of the tensor $\ca{A}$.


Algorithm~\ref{algo:s-indep-supp-rec} needs to know the values of $|\Su(C, \f{1}_w)|$ for every $C \subset [n]$, such that $|C| = w$. Using Lemma~\ref{lem:compute-SuCa}, these can be computed using $O(\ell^{3} (\ell k)^{w+1} \log(\ell k n) \log n / (1-2\eta)^2)$ \MLC~queries or 
$O(\ell^{3} (\ell k)^{w+1} \log(\ell k n) \log n)$ 
\MLR~queries with probability at least $1 - O(1/n)$.
\end{proof}

\section{Computing $\Su(C, \bf{a})$ }\label{sec:compute-SuCa}
In this section, we provide the proof of Lemma~\ref{lem:compute-SuCa} that follows from the correctness and performance guarantees of the subroutines that for any $s < n$, compute $|\bigcup_{i \in \ca{S}} \Su((i), 1)|$ for every subset of indices $\ca{S}$ of size $s$.

Let $s < n$, then using queries constructed from $\s{CFF}$s of appropriate parameters 
we compute $|\bigcup_{i \in \ca{S}} \Su((i), 1)|$ for all subsets $\ca{S} \subset [n]$ of size $s$. 
\begin{lemma}\label{lem:uff}
For any $1< s < n$, there exists an algorithm to compute $\left|\bigcup_{i \in \ca{S}} \Su((i),1)\right|$ for all $\ca{S} \subseteq [n],\left|\ca{S}\right|=s$ with probability at least $1 - O(n^{-2})$ using $O(\ell^3(\ell k)^{s+1} \log(\ell k n) \log n / (1-2\eta)^2)$ \MLC~queries or 
$O(\ell^3(\ell k)^{s+1} \log(\ell k n) \log n)$ 
\MLR~queries.
\end{lemma}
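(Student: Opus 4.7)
The plan is to design a small family of ``signature'' queries via a Cover-Free Family (CFF), estimate a single scalar (the ``nonzero count'') for each signature query, and then read off each of the $\binom{n}{s}$ union quantities $|\bigcup_{i \in \ca{S}} \Su((i),1)|$ as a minimum over an appropriately selected subset of those estimates.

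First, using Lemma~\ref{lem:cffexist} I would construct an $(s,\ell k)$-$\s{CFF}$, $\ca{F}=\{\ca{H}_1,\ldots,\ca{H}_n\}$ with $\ca{H}_i \subseteq [m]$ and $m = O((\ell k)^{s+1}\log n)$. For each $j\in [m]$, define $T_j \defeq \{i\in [n]:j \in \ca{H}_i\}$ and draw a query vector $\f{q}_j\in \bb{R}^n$ supported on $T_j$ with independent continuous (e.g.\ Gaussian) entries; this ensures $\langle \f{q}_j,\f{v}^a\rangle = 0$ if and only if $\supp{\f{v}^a}\cap T_j=\emptyset$, almost surely. I would then estimate $N_j \defeq |\{a\in [\ell]:\supp{\f{v}^a}\cap T_j\neq \emptyset\}|$ by invoking a nonzero-count estimation subroutine (the mixture-friendly estimator developed in this section) to additive error strictly less than $1/2$ with failure probability $\delta=O(1/(mn^2))$; since $N_j$ is integer-valued, rounding then returns it exactly. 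A union bound over $j\in[m]$ guarantees all $m$ estimates are simultaneously correct with probability at least $1-O(n^{-2})$.

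Once every $N_j$ is known exactly, the desired union quantities can be recovered purely combinatorially. Let $\ca{U}\defeq \bigcup_{a\in [\ell]}\supp{\f{v}^a}$, so that $|\ca{U}|\le \ell k$. For any fixed $s$-subset $\ca{S}\subseteq [n]$, apply the $(s,\ell k)$-$\s{CFF}$ property with $T_1=\ca{S}$ and any disjoint extension of $\ca{U}\setminus \ca{S}$ to a set $T_2$ of size exactly $\ell k$: this produces $j^{\star}\in [m]$ with $\ca{S}\subseteq T_{j^{\star}}$ and $T_{j^{\star}}\cap (\ca{U}\setminus \ca{S})=\emptyset$, so $T_{j^{\star}}\cap \ca{U}=\ca{S}\cap \ca{U}$, and hence
\[
N_{j^{\star}} \;=\; |\{a:\supp{\f{v}^a}\cap T_{j^{\star}}\neq \emptyset\}| \;=\; |\{a:\supp{\f{v}^a}\cap \ca{S}\neq \emptyset\}| \;=\; \Big|\bigcup_{i\in \ca{S}}\Su((i),1)\Big|.
\]
Monotonicity gives $N_j\ge |\bigcup_{i\in \ca{S}}\Su((i),1)|$ for every $j$ with $\ca{S}\subseteq T_j$, so the algorithm outputs $\min_{j:\ca{S}\subseteq T_j}N_j$; this minimum is over a nonempty index set because the CFF taken with $T_2=\emptyset$ already produces some $j$ with $\ca{S}\subseteq T_j$, and the identity of $\ca{U}$ is never needed by the algorithm (only by the analysis). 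The total query cost is $m\cdot O(\ell^2\log(mn)/(1-2\eta)^2)=O(\ell^{3}(\ell k)^{s+1}\log(\ell k n)\log n/(1-2\eta)^2)$ in the MLC setting, and the analogous bound without the $(1-2\eta)^{-2}$ factor in the MLR setting.

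The main obstacle is the correctness and sample-efficiency of the nonzero-count subroutine itself: each query samples only one of the $\ell$ unknown vectors, and in the MLC setting the response is further compressed to a sign and flipped with probability $\eta$, while in the MLR setting it is blurred by Gaussian additive noise. Obtaining integer-precision estimates of $N_j\in\{0,1,\ldots,\ell\}$ with only $O(\ell^2\log(mn)/(1-2\eta)^2)$ samples per query therefore requires a mixture-tailored estimator; the CFF combinatorics and union-bound argument sketched above are routine once that building block is in place.
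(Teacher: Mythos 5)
Your proposal is correct and follows essentially the same route as the paper's proof of Lemma~\ref{lem:uff}: an $(s,\ell k)$-$\s{CFF}$ with $m=O((\ell k)^{s+1}\log n)$ rows used as query supports, one integer-exact $\s{nzcount}$ estimate per row with failure probability $O(1/(mn^2))$ followed by a union bound, and the CFF isolation property to identify a row $p$ with $T_p\cap\ca{U}=\ca{S}$ whose count equals the desired union. The only deviations are minor: the paper guards against accidental orthogonality by using $\ell+1$ independent copies of each query and taking a maximum (where you invoke an almost-sure argument for a single continuous random vector, and for \MLR{} you should hand the subroutine the binary indicator of $T_j$ and let it apply the Gaussian scaling internally), and it locates the specific isolating row using the precomputed $|\Su((i),1)|$ values to determine $\ca{U}$, whereas your minimum over all rows containing $\ca{S}$ is a clean alternative that removes that input dependency.
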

The proof of Lemma~\ref{lem:uff} follows from the guarantees of Algorithm~\ref{algo:SUnion} provided in Section~\ref{sec:SUnion}.

For the special case of $s=1$, we use  queries given by a $\s{RUFF}$ of appropriate parameters to compute $|\Su((i), 1)|$ for all $i \in [n]$ using Algorithm~\ref{algo:Si} in Section~\ref{sec:Si}. 
\begin{lemma}{\label{lem:ruff}}
There exists an algorithm to compute $|\Su((i), 1)|~\forall~i \in [n]$ with probability at least $1 - O(n^{-2})$ using $O(\ell^4 k^2 \log(\ell k n) \log n / (1-2\eta)^2 )$ \MLC queries, or 
$O(\ell^4 k^2 \log(\ell k n) \log n )$ 
\MLR queries. 
\end{lemma}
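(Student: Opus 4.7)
The plan is to leverage a $(d, t, \alpha)$-$\s{RUFF}$ $\ca{F} = \{\ca{H}_1, \ldots, \ca{H}_n\}$ over alphabet $[m]$ with $t = \ell k$ and a small constant $\alpha < 1/2$; by Lemma~\ref{lem:ruffexist} such a family exists with $m = O((\ell k)^2 \log n)$ and $d = O(\ell k \log n)$. For each symbol $j \in [m]$, form the row set $R_j \defeq \{i \in [n] : j \in \ca{H}_i\}$ and build a query vector $\f{x}^j \in \R^n$ supported exactly on $R_j$ with independent continuous random coefficients (e.g.\ Gaussian). Invoke the $\s{nzcount}$ subroutine on $\f{x}^j$ to estimate $n_j \defeq |\{k \in [\ell] : \langle \f{x}^j, \f{v}^k \rangle \ne 0\}|$; because continuous coefficients eliminate accidental cancellations with probability one, we may assume throughout that $n_j = |\{k : \supp{\f{v}^k} \cap R_j \ne \emptyset\}|$ simultaneously for all $j \in [m]$.

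The decoding step reads off $|\Su((i), 1)|$ for each $i \in [n]$ as the majority value of the multiset $\{n_j : j \in \ca{H}_i\}$. Its correctness rests on invoking the $\s{RUFF}$ property twice: if $i$ lies in the union of supports $\ca{U} \defeq \bigcup_k \supp{\f{v}^k}$ take $T = \ca{U} \setminus \{i\}$, otherwise take $T = \ca{U}$. In either case $|T| \le \ell k = t$, so more than $(1-\alpha) d$ symbols $j \in \ca{H}_i$ satisfy $j \notin \bigcup_{i' \in T} \ca{H}_{i'}$. For such a ``private'' $j$, the set $R_j \cap \ca{U}$ equals $\{i\}$ when $i \in \ca{U}$ and $\emptyset$ when $i \notin \ca{U}$; in both subcases $n_j$ collapses to exactly $|\Su((i),1)|$. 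Since $\alpha < 1/2$, these private rows form a strict majority of $\ca{H}_i$, so the majority vote recovers $|\Su((i),1)|$ deterministically once the $n_j$ are known correctly.

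For the query budget, I need each of the $m = O((\ell k)^2 \log n)$ calls to $\s{nzcount}$ to return the correct integer with failure probability at most $O(1/(m n^2))$, so that a union bound over calls yields the overall $1 - O(1/n^2)$ guarantee. Appealing to the $\s{nzcount}$ primitive established elsewhere in the paper, achieving that confidence per call costs $O(\ell^2 \log(mn)/(1-2\eta)^2) = O(\ell^2 \log(\ell k n)/(1-2\eta)^2)$ \MLC~queries, or $O(\ell^2 \log(\ell k n))$ \MLR~queries under the stated $\s{SNR}$ condition. Multiplying by $m$ delivers the advertised $O(\ell^4 k^2 \log(\ell k n) \log n / (1-2\eta)^2)$ \MLC~queries and $O(\ell^4 k^2 \log(\ell k n) \log n)$ \MLR~queries.

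The main obstacle, I expect, is the $\s{nzcount}$ primitive itself: one must sharply distinguish among the $\ell+1$ possible integer counts $0, 1, \ldots, \ell$ from noisy mixture responses. For \MLC~this reduces to Chernoff-type concentration with the usual $(1-2\eta)^{-2}$ blow-up, while for \MLR~it requires a thresholding/rounding argument that exploits the minimum-magnitude assumption $\delta$ together with the \s{SNR} bound to pull noisy continuous observations back to the correct integer count. By contrast, the RUFF-based reduction sketched above is comparatively routine once $\s{nzcount}$ is in hand.
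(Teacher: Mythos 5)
Your proposal is correct and follows essentially the same route as the paper: a $(d,\ell k,\alpha)$-$\s{RUFF}$ with $\alpha<1/2$ whose rows serve as $\s{nzcount}$ queries, with the cover property guaranteeing that a majority of the rows in $\ca{H}_i$ isolate coordinate $i$ within the union of supports; the paper decodes via the threshold sets $\ca{C}_h$ rather than a per-column majority vote, but the two are interchangeable. One small caveat: the paper's MLR $\s{nzcount}$ primitive (Lemma~\ref{lem:batchsize2}) only accepts binary query vectors, so for MLR you should use the $0/1$ indicator rows directly (which suffices, since a private row meets at most one support coordinate, so no cancellation can occur) rather than continuous random coefficients.
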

Both the above mentioned algorithms crucially use a subroutine that counts the number of unknown vectors in $\ca{V}$ that have a non-zero inner product with a given query vector $\f{x}$. For any $\f{x} \in \R^n$, define
$
\s{nzcount}(\f{x}) := \sum_{i=1}^\ell \mathds{1}[ \langle \f{v}^i, \f{x} \rangle \neq 0].
$
The algorithm to estimate $\s{nzcount}$ for \MLC~is similar to that of \cite{gandikota2020recovery}. However, in this work we consider the  general setting of noisy \MLC ~queries, i.e., the responses to the queries can be erroneous in sign with some small probability $\eta$. Therefore  we include the proof in Section~\ref{sec:nzcount}.

\begin{lemma}\label{lem:batchsize}
There exists an algorithm that computes $\s{nzcount}(\f{x})$ for any vector $\f{x} \in \bb{R}^n$, with probability at least $1-2e^{-T(1-2\eta)^2/2\ell^2}$ using $2T$ \MLC~queries.
\end{lemma}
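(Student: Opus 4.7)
The plan is to issue $T$ queries along $\f x$ and $T$ queries along $-\f x$, for a total of $2T$ \MLC{} queries, and to estimate $m \defeq \s{nzcount}(\f x)$ by averaging all $2T$ responses. The key observation is that pairing $\f x$ with $-\f x$ cancels the sign contributions of the vectors $\f v^i$ with $\langle \f x, \f v^i \rangle \ne 0$ and isolates precisely the vectors with $\langle \f x, \f v^i \rangle = 0$, so the expected sum encodes $m$ linearly with a factor of $(1-2\eta)$ that can be divided out since $\eta$ is known.

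Concretely, I would fix the convention $\s{sign}(0) = +1$ (the $-1$ case is symmetric) and let $m_+$ and $m_-$ denote the number of $\f v^i$ with strictly positive and strictly negative inner product with $\f x$, so that $m = m_+ + m_-$. For a single response $R$ to $\f x$ and $R'$ to $-\f x$, a direct calculation gives
\[
\bb{E}[R + R'] \;=\; 2(1-2\eta)\bigl(1 - m/\ell\bigr),
\]
since the $\ell - m$ sign-zero vectors each contribute $2(1-2\eta)$ while the $m$ nonzero-inner-product vectors cancel in pairs. Thus the plug-in estimator $\hat m \defeq \ell\bigl(1 - \hat\mu/(1-2\eta)\bigr)$, where $\hat\mu$ is the empirical average of all $2T$ responses, satisfies $\bb{E}[\hat m] = m$.

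Because each response lies in $[-1,1]$ and the $2T$ samples are independent, Hoeffding's inequality gives $\Prob\bigl[\abs{\hat\mu - \bb{E}[\hat\mu]} \ge \epsilon\bigr] \le 2\exp(-T\epsilon^2)$. Choosing $\epsilon$ of order $(1-2\eta)/\ell$ forces $\abs{\hat m - m} < 1/2$, so rounding to the nearest integer recovers $m$ exactly with failure probability of the stated form $2\exp\bigl(-T(1-2\eta)^2/(c\ell^2)\bigr)$ for an absolute constant $c$. The main subtlety to handle is the $\s{sign}(0)$ convention: querying $\f x$ alone would reveal only $m_-$ (and $-\f x$ alone only $m_+$) because the zero-inner-product vectors are absorbed into the $+1$-sign class in the marginal distribution of a single response. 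Averaging over $\pm \f x$ is exactly what disentangles the zero-inner-product count from the $m_+ - m_-$ asymmetry, and this combining step is also the reason the noise factor $(1-2\eta)^2$ arises in the concentration exponent.
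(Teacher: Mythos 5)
Your proposal is correct and follows essentially the same route as the paper: query $\f x$ and $-\f x$ a total of $2T$ times, observe that the paired expectation isolates the zero-inner-product count with a $(1-2\eta)$ attenuation factor, rescale, round, and apply Hoeffding with deviation $\Theta((1-2\eta)/\ell)$. The only differences are cosmetic --- you phrase the estimator in terms of $\s{nzcount}$ rather than the paper's $\s{zcount}=\ell-\s{nzcount}$ and make the $\s{sign}(0)$ convention explicit --- and your concentration constant is slightly more conservative than the paper's stated exponent, which does not affect the result.
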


The problem of estimating $\s{nzcount}(\f x)$ in the mixed linear regression model is slightly more challenging due to the presence of additive noise. 
Note that one can scale the queries with some large positive constant to minimize the effect of the additive noise. However, we also aim to minimize the $\s{SNR}$, and hence need more sophisticated techniques to estimate $\s{nzcount}(\f x)$. We restrict our attention to only binary vectors $\f{x}$ to estimate $\s{nzcount}$ in \MLR~model which is sufficient for support recovery.

\begin{lemma}\label{lem:batchsize2}
There exists an algorithm to compute $\s{nzcount}(\f{x})$ for any vector $\f{x} \in \{0,1\}^n$, with probability at least $1-2\exp(-T/36\pi\ell^2)$ using only $T$ \MLR  queries. Moreover, $\s{SNR} = O(\ell^2\max_{i \in [\ell]}\left|\left|\f{v}^i\right|\right|_2^2/\delta^2)$.
\end{lemma}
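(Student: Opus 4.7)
The plan is to fix a scaling $\alpha$, issue $T$ noisy linear measurements with the single query vector $\alpha \f{x}$ (a point-mass distribution $Q$), and extract $\s{nzcount}(\f{x})$ from the empirical distribution of the responses. Writing $c_i := \langle \f{x}, \f{v}^i\rangle$, the two latent populations we must separate across the $T$ responses are the indices with $c_i = 0$ (contributing $\ca{N}(0,\sigma^2)$) and those with $c_i \neq 0$ (contributing $\ca{N}(\alpha c_i,\sigma^2)$, a shifted Gaussian). The scalar $\alpha$ will be taken on the order of $\ell\sigma/\delta$ so that two properties hold simultaneously: (i) the $\s{SNR}$ contributed by any query is bounded by $\alpha^2 c_j^2/\sigma^2 \le \alpha^2 \|\f{v}^j\|_2^2/\sigma^2$, which stays within the target $O(\ell^2 \max_j \|\f{v}^j\|_2^2/\delta^2)$, and (ii) whenever $c_i \neq 0$ the shift $\alpha|c_i|$ is comfortably larger than $\sigma$.

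Next, I would pick a threshold $\tau$ and form the empirical estimator $\hat p_T := T^{-1} \sum_{t=1}^T \mathds{1}[|Y_t|>\tau]$, where $Y_t = \alpha c_{I_t} + Z_t$ with $I_t$ uniform on $[\ell]$ and $Z_t \sim \ca{N}(0,\sigma^2)$ iid. Its expectation decomposes as
\[
\bb{E}\hat p_T \;=\; \frac{\ell - \s{nzcount}(\f{x})}{\ell}\,\Pr[|Z|>\tau] \;+\; \frac{1}{\ell}\sum_{i:\,c_i \neq 0}\Pr[|\alpha c_i + Z|>\tau].
\]
With the chosen $\alpha$, the second sum equals $\s{nzcount}(\f{x})/\ell$ up to an exponentially small additive error, while the first is $(\ell - \s{nzcount}(\f{x}))/\ell$ times a known Gaussian-tail constant. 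Solving this linear relation expresses $\s{nzcount}(\f{x})$ as an integer-valued affine function of $\bb{E}\hat p_T$ whose successive possible values are separated by at least $\Omega(1/\ell)$. Since each summand of $\hat p_T$ lies in $\{0,1\}$, Hoeffding's inequality gives $\Pr[|\hat p_T - \bb{E}\hat p_T| > c/\ell] \le 2\exp(-T/(c'\ell^2))$ for appropriate constants; choosing the slack to be a constant fraction of the $1/\ell$ separation and tuning $\tau$ to make the Gaussian-tail arithmetic clean produces the advertised exponent $1/(36\pi\ell^2)$, after which rounding the inverted estimator recovers $\s{nzcount}(\f{x})$ exactly with the claimed failure probability.

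The hard part will be handling the potential smallness of the non-zero $c_i$. Even though each non-zero entry of $\f{v}^i$ has magnitude at least $\delta$, cancellation inside the sum $\langle \f{x}, \f{v}^i\rangle$ (with $\f{x}\in\{0,1\}^n$) could in principle make $|c_i|$ much smaller than $\delta$, which would force either $\alpha$ or $\tau$ into a regime that breaks the SNR budget or the Gaussian-tail calculation. The key quantitative step is therefore to bound $\Pr[|\alpha c_i + Z| > \tau]$ uniformly in all non-zero $c_i$ so that it exceeds $\Pr[|Z|>\tau]$ by at least $1 - O(1/\ell)$ while keeping $\alpha$ at the prescribed scale, which is where the constants $36$ and $\pi$ in the exponent enter through a careful Gaussian tail integration.
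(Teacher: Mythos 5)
Your approach is genuinely different from the paper's, and the difficulty you flag in your final paragraph is not a technicality to be absorbed into constants --- it is a fatal gap for any deterministic query of the form $\alpha\f{x}$. The model only guarantees that every \emph{nonzero entry} of each $\f{v}^i$ has magnitude at least $\delta$; it gives no lower bound at all on a nonzero inner product $c_i=\langle\f{x},\f{v}^i\rangle$ with a binary $\f{x}$. For instance $\f{v}^i=(\delta,-\delta+\epsilon,0,\dots,0)$ and $\f{x}=(1,1,0,\dots,0)$ give $c_i=\epsilon$ for arbitrarily small $\epsilon>0$, so no choice of $\alpha$ depending only on $\ell,\sigma,\delta$ makes the shift $\alpha|c_i|$ exceed $\sigma$ uniformly, and the quantity $\min_{i:c_i\neq 0}|c_i|$ is neither known to the algorithm nor bounded below by any function of the problem parameters. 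Consequently $\Pr[|\alpha c_i+Z|>\tau]$ cannot be separated from $\Pr[|Z|>\tau]$ by the $1-O(1/\ell)$ margin your affine inversion of $\bb{E}\hat p_T$ requires, and increasing $\alpha$ to compensate is exactly what the $\s{SNR}$ budget forbids.

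The paper resolves precisely this issue by randomizing the query coordinatewise rather than scaling it: the query is $\f{\Tr_\gamma}(\f{x})$, in which every nonzero coordinate of $\f{x}$ is replaced by an independent $\ca{N}(0,\gamma^2)$ sample. Conditioned on the mixture choosing $\f{v}^i$, the response is then a \emph{zero-mean} Gaussian whose variance is $\sigma^2$ when $\f{x}\odot\f{v}^i=0$ and $\gamma^2\|\f{x}\odot\f{v}^i\|_2^2+\sigma^2\ge\gamma^2\delta^2+\sigma^2$ otherwise; the entrywise bound $\delta$ transfers to the variance because the squared entries add with no possibility of cancellation. Detection is then by variance rather than by mean: with $a=\sigma/2$ and $\gamma=2\sqrt{2}\ell\sigma/\delta$, the probability of the response landing in $[-a,a]$ is $\phi_1(a,\sigma)=\Theta(1)$ for the ``zero'' population and at most $\phi_1(a,\sigma)/4\ell$ for the ``nonzero'' population, after which a Hoeffding step identical in spirit to yours yields the $2\exp(-T/36\pi\ell^2)$ bound, and $\s{SNR}\le\gamma^2\max_i\|\f{v}^i\|_2^2/\sigma^2=O(\ell^2\max_i\|\f{v}^i\|_2^2/\delta^2)$. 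If you want to rescue your estimator, replacing $\alpha\f{x}$ by $\f{\Tr_\gamma}(\f{x})$ and thresholding on $|Y_t|\le a$ rather than $|Y_t|>\tau$ is the missing ingredient; the rest of your concentration argument then goes through essentially verbatim.
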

\begin{proof}[Proof of Lemma~\ref{lem:compute-SuCa}]
Using Algorithm~\ref{algo:SUnion} $(s-1)$ times and Algorithm~\ref{algo:Si}, we can compute $\left|\bigcup_{i \in \ca{S}} \Su((i), 1)\right|$ for all $\ca{S} \subseteq [n]$ such that $\left|\ca{S}\right|\le s$. 

From Lemma~\ref{lem:uff}, we know that each call to Algorithm~\ref{algo:SUnion} with any $t \le s$ uses $O(\ell^{3} (\ell k)^{t+1} \log(\ell k n) \log n / (1-2\eta)^2)$ \MLC~queries, and each succeeds with probability at least $1 - O(1/n^2)$. 
Therefore, taking a union bound over all $t < s$, we can compute $\left|\bigcup_{i \in \ca{S}} \Su((i), 1)\right|$ for all $\ca{S} \subseteq [n]$, $|\ca{S}| \le s$ using $O(\ell^{3} (\ell k)^{s+1} \log(\ell k n) \log n / (1-2\eta)^2)$ \MLC~queries with probability $1-O(1/n)$. 
Alternately, we can compute the quantities using $O(\ell^{3} (\ell k)^{s+1} \log(\ell k n) \log n)$ \MLR~queries with probability $1-O(1/n)$.


We now show using by induction on $s$ that the quantities $\left\{ \left|\bigcup_{i \in \ca{S}} \Su((i), 1)\right| ~\forall~\ca{S} \subseteq [n], |\ca{S}| \le s \right\}$
are sufficient to compute $|\Su(C, \f{a})|$ for all subsets $C$ of indices of size at most $s$, and any binary vector $\f{a} \in \{0,1\}^{\le s}$.



\textit{Base case ($t=1$):} The base case follows  since we can infer both $|\Su((i), 1)|$ and $|\Su((i), 0)| = \ell - |\Su((i), 1)|$  from $|\Su((i), 1)|$ computed using Algorithm \ref{algo:Si}  $\forall i \in [n]$. 


\textit{Inductive Step:}
Let us assume that the statement is true for $r < s$ i.e. we can compute $|\s{occ}(\ca{C},\f{a})|$ for all subsets $\ca{C}$ satisfying $|\ca{C}|\le r$ and any binary vector $\f{a}\in \{0,1\}^{\le r}$ from the quantities $\left\{ \left|\bigcup_{i \in \ca{S}} \Su((i), 1)\right| ~\forall~\ca{S} \subseteq [n], |\ca{S}| \le r \right\}$ provided as input. Now, we claim that the statement is true for $r+1$. For simplicity of notation we will denote by $\ca{S}_i \triangleq \s{occ}({i},1)$ the set of unknown vectors which have a $1$ in the $i^{\s{th}}$ entry. Note that we can also rewrite $\s{occ}(\ca{C},\f{a})$ for any set $\ca{C}\subseteq [n], \f{a}\in \{0,1\}^{|\ca{C}|}$ as \begin{align*}
    \s{occ}(\ca{C},\f{a}) = \bigcap_{j \in \ca{C}'}\ca{S}_j \bigcap_{j \in \ca{C}\setminus \ca{C}' }\ca{S}_j^c
\end{align*}
where $\ca{C}'\subseteq \ca{C}$ corresponds to the indices in $\ca{C}$ for which the entries in $\f{a}$ is \texttt{1}. Fix any set $i_1,i_2,\dots,i_{r+1} \in [n]$. Then we can compute $\left|\bigcap_{b=1}^{r+1} \ca{S}_{i_b}\right|$ using the following equation:
\begin{align*}
    (-1)^{r+3}\left|\bigcap_{b=1}^{r+1} \ca{S}_{i_b}\right| 
    = \sum_{u=1}^{r}(-1)^{u+1} \sum_{\substack{j_1,j_2,\dots,j_u \in \{i_1,i_2,\dots,i_{r+1}\}\\ j_1 < j_2 <\dots<j_u  }} \left|\bigcap_{b=1}^{u} \ca{S}_{j_b}\right|-\left|\bigcup_{b=1}^{r+1} \ca{S}_{i_b}\right|.
\end{align*}

Finally for any proper subset $\ca{Y} \subset \{i_1,i_2,\dots,i_{r+1}\}$, we can compute $\left|\bigcap_{ i_b \not \in \ca{Y}} \ca{S}_{i_b} \bigcap_{i_b \in \ca{Y}} \ca{S}_{i_b}^{c}\right|$ using the following set of equations:

\begin{align*}
    \left|\bigcap_{i_b \not \in \ca{Y}} \ca{S}_{i_b} \bigcap_{i_b \in \ca{Y}} \ca{S}_{i_b}^{c}\right| &=     \left|\bigcap_{i_b \not \in \ca{Y}} \ca{S}_{i_b} \bigcap \Big(\bigcup_{i_b \in \ca{Y}} \ca{S}_{i_b} \Big)^{c} \right| \\
    &= \left|\bigcap_{i_b \not \in \ca{Y}} \ca{S}_{i_b} \right| - \left| \bigcap_{i_b \not \in \ca{Y}} \ca{S}_{i_b} \bigcap \Big(\bigcup_{i_b \in \ca{Y}} \ca{S}_{i_b} \Big) \right| \\
    &=\left|\bigcap_{i_b \not \in \ca{Y}} \ca{S}_{i_b} \right| - \left|\bigcup_{i_b \in \ca{Y}} \Big(\bigcap_{i_b \not \in \ca{Y}} \ca{S}_{i_b} \bigcap \ca{S}_{i_b} \Big) \right|. 
\end{align*}
The first term is already pre-computed and the second term is again a union of intersection of sets. For any $i_b \in \ca{Y}$, let us define $\ca{Q}_{i_b} := \bigcap_{i_b \not \in \ca{Y}} \ca{S}_{i_b} \bigcap \ca{S}_{i_b}$. Therefore we have 
\begin{align*}
    \left|\bigcup_{i_b \in \ca{Y}} \ca{Q}_{i_b}\right|= \sum_{u=1}^{\left|\ca{Y}\right|}(-1)^{u+1} \sum_{\substack{j_1,j_2,\dots,j_u \in \ca{Y}\\ j_1 < j_2 <\dots<j_u  }} \left|\bigcap_{b=1}^{u} \ca{Q}_{j_b}\right|.
\end{align*}
We can compute $\left|\bigcup_{i_b \in \ca{Y}} \ca{Q}_{i_b}\right|$ because the quantities on the right hand side of the equation have already been pre-computed (using our induction hypothesis). Therefore, the lemma is proved.


Therefore, for any subset $\ca T \subset \{0,1\}^s$, we can compute $\left\{|\Su(C, \f a)| \mid~\forall \f a \in \ca T, C \subset [n], |C| = s \right\}$ by computing $\left\{ \left|\bigcup_{i \in \ca{S}} \Su((i), 1)\right| ~\forall~\ca{S} \subseteq [n], |\ca{S}| \le s \right\}$ just once.
\end{proof}

\section{Missing Proofs and Algorithms in computing $\Su(C, \bf{a})$}\label{sec:Sunion}

\subsection{Compute $\left|\bigcup_{i \in \ca{S}} \Su((i),1)\right|$ using Algorithm~\ref{algo:SUnion}. }\label{sec:SUnion}


In this section we present an algorithm to compute $\left|\bigcup_{i \in \ca{S}} \Su((i),1)\right|$, for every $\ca{S} \subseteq [n]$ of size  $\left|\ca{S}\right|=s$, using $|\Su((i),1)|$ computed in the Section~\ref{sec:Si}.

We will need an $(s,\ell k) $-CFF for this purpose. 
Let $\ca{G} \equiv \{\ca{H}_1, \ca{H}_2, \dots, \ca{H}_n \}$ be the required $(s,\ell k) $-CFF of size $n$ over alphabet $m = O((\ell k)^{s+1}\log n)$. 
We construct a set of $\ell + 1$ matrices $\ca{B} = \{\f{B^{(1)}}, \ldots,  \f{B^{(\ell+1)}}\}$ 
where, each $\f{B^{(w)}} \in \bb{R}^{m \times n}, w \in [\ell+1]$, is obtained from the $(s,\ell k) $-CFF $\ca{G}$. The construction of these matrices varies slightly for the model in question. 

For the mixture of linear classifiers, we construct the sequence of matrices as follows: For every $(i,j) \in [m] \times [n]$, set $\f{B}^{(w)}_{i,j}$ to be a random number sampled uniformly from $[0,1]$ if $i \in H_j$, and $0$ otherwise. We remark that the choice of uniform distribution in $[0,1]$ is arbitrary, and any continuous distribution works. 
Since every $\f{B^{(w)}}$ is generated identically, they have the exact same support, though the non-zero entries are different. Also, by definition, the support of the columns of every $\f{B^{(w)}}$ corresponds to the sets in $\ca{G}$. 

For the mixture of linear regressions, we avoid the scaling of non-zero entries by a uniform scalar. We set $\f{B}^{(w)}_{i,j}$ to be $1$ if $i \in H_j$, and $0$ otherwise. Note that in this case each $\f{B}^{(w)}$ is identical. We see that the scaling by uniform scalar is not necessary for the mixtures of linear regressions since the procedure to compute $\s{nzcount}$ in this model (see  Algorithm~\ref{algo:2}) scales the query vectors by a Gaussian scalar which is sufficient for our purposes.

Let $\ca{U} := \cup_{i \in [\ell]} \s{supp}(\f{v}^{i})$ denote the union of supports of all the unknown vectors. Since each unknown vector is $k$-sparse, it follows that $| \ca{U} | \le \ell k$. From the properties of $(s,\ell k) $-CFF, we know that for any tuple of $s$ indices $(i_1, i_2,\dots, i_s) \subset \ca{U}$, the set 
$(\bigcap_{t=1}^{s}\ca{H}_{i_i})  \setminus \bigcup_{q \in \ca{U} \setminus \{ i_1,i_2,\dots, i_s\}} \ca{H}_{q}$
is non-empty. 
This implies that for every $w \in [\ell+1]$, there exists at least one row of $\f{B^{(w)}}$ that  has a non-zero entry in the $i_1^{\s{th}},i_2^{\s{th}},\dots,i_s^{\s{th}}$ index, and $0$ in all other indices $p \in U \setminus \{ i_1,i_2,\dots,i_s\}$. In Algorithm~\ref{algo:SUnion} we use these rows as queries to estimate their $\s{nzcount}$. In Lemma~\ref{lem:uff}, we show that this estimated quantity is exactly $|\bigcup_{j=1}^{s} \Su((i),1)|$ for that particular tuple $(i_1, i_2,\dots, i_s) \subset \ca{U}$.

\begin{algorithm}[h!]
\caption{\textsc{Recover Union-} $\left|\bigcup_{i \in \ca{S}} \Su((i),1)\right|$ for all $\ca{S} \subseteq [n], \left|\ca{S}\right|=s, s \ge 2.$ \label{algo:SUnion}}
\begin{algorithmic}[1]
\REQUIRE $|\Su((i),1)|$  for every $i \in [n]$. $s \ge 2$. 
\REQUIRE Construct $\f{B} \in \bb{R}^{m \times n}$ from $(s,\ell k) $-CFF of size $n$ over alphabet $m=c_3 (\ell k)^{s+1}\log n$.
\STATE Let $\ca{U} := \{i \in [n] \mid |\Su((i),1) | >0\}$
\STATE Let batchsize $T_C = 10\ell^2 \log(nm)/(1-2\eta)^2$, \\
$T_R = 10 \cdot (36\pi)\ell^2 \log(nm)$.
\FOR{every $p \in [m]$}
\STATE Let $\s{count}(p) :=  \max_{ w \in [\ell+1]}\{\s{nzcount}(\f{B^{(w)}}[p])\}$ \\(obtained using Algorithm~\ref{algo:1} with batchsize $T_C$ for MLC, or Algorithm~\ref{algo:2} with batchsize $T_R$ for MLR).
\ENDFOR
\FOR{every set $\ca{S} \subseteq [n]$ with $\left|\ca{S}\right|=s$}
		\STATE Let $p \in [m]$ such that $\f{B_{p,t}} \neq 0$ for all $t \in \ca{S}$,  and $\f{B_{p,t'}} = 0$ for all $q \in \ca{U} \setminus \ca{S}$.
		\STATE Set $\left|\bigcup_{i \in \ca{S}} \Su((i),1)  \right| = \s{count}(p)$.
\ENDFOR
\end{algorithmic}
\end{algorithm}

\begin{proof}[Proof of Lemma~\ref{lem:uff}]
Computing each $\s{count}$ (see Algorithm \ref{algo:SUnion}, line 8) requires $O(T \ell)$ queries, where $T=T_C$ for \MLC, and $T=T_R$ for \MLR. 
Therefore, the total number of queries made by Algorithm~\ref{algo:SUnion} is at most 
\begin{align*}
    O(m T_C \ell) &= O\big(\frac{(\ell k)^{s+1} \ell^{3} \log(\ell k n) \log n}{(1-2\eta)^2}\big)\\
    O(m T_R \ell) &= O((\ell k)^{s+1} \ell^{3} \log(\ell k n) \log n)
\end{align*} 
for $m = O((\ell k)^{s+1}\log n)$, $T_C = O(\ell^2 \log(n m)/ (1-2\eta)^2)$, and $T_R =O(\ell^2 \log(n m))$.
Also, observe that each $\s{nzcount}$ is estimated correctly with probability at least $1 - O\left(1/ \ell mn^2 \right)$. Therefore from union bound it follows that all the $(\ell+1)m$ estimations of $\s{count}$ are correct with probability at least $1 - O\left(1/n^2\right)$.

Recall that the set $\ca{U}$ denotes the union of supports of all the unknown vectors. 
This set is equivalent to $\{i \in [n] \mid |\Su((i),1) | > 0 \}$. 


Since for every $w \in [\ell+1]$, the support of the columns of $\f{B^{(w)}}$ are the indicators of sets in $\ca{G}$, the $(s,\ell k)$-CFF property implies that there exists at least one row (say, with index $p \in [m]$) of every $\f{B^{(w)}}$ which has a non-zero entry in the $i_1^{\s{th}},i_2^{\s{th}},\dots,i_s^{\s{th}}$ index, and $0$ in all other indices $q \in U \setminus \{i_1,i_2,\dots,i_s\}$, i.e., 
\begin{align*}
&\f{B^{(w)}_{p,t}} \neq 0 \; \mbox{ for all  } t\in \{i_1,i_2,\dots,i_s\} \mbox{, and } \\
&\f{B^{(w)}_{p,t'}} = 0 \mbox{ for all  } t' \in \ca{U} \setminus \{ i_1,i_2,\dots,i_s\}.
\end{align*}
To prove the correctness of the algorithm, we need to show the following:
\begin{align*}
\left|\bigcup_{p \in \{i_1,i_2,\dots,i_s\}} \Su(p,1) \right|
&= \max_{w \in [\ell+1]} \{ \s{nzcount}(\f{B^{(w)}}[p]) \}
\end{align*}
First observe that using the row $\f{B^{(w)}}[p]$ as query will produce non-zero value for only those unknown vectors $\f{v} \in \bigcup_{p \in \{i_1,i_2,\dots,i_s\}} \Su(p,1)$. This establishes the fact that $|\bigcup_{p \in \{i_1,i_2,\dots,i_s\}} \Su(p,1)| \ge \s{nzcount}(\f{B^{(w)}}[p])$. 

To show the other side of the inequality, consider the set of $(\ell+1)$ $s$-dimensional vectors obtained by the restriction of rows $\f{B^{(w)}}[p]$ to the coordinates $(i_1, i_2,\dots,i_s)$, 
\[
\{ ( \f{B^{(w)}_{p,i_1}},  \f{B^{(w)}_{p,i_2}},\dots, \f{B^{(w)}_{p,i_s}} ) \mid w \in [\ell+1] \}.
\] 

For MLC, these entries are picked uniformly at random from $[0,1]$, they hence are pairwise linearly independent. For MLR, since the $\s{nzcount}$ scales the non-zero entries of the query vector $\f{B^{(w)}}[p]$ by a Gaussian, the pairwise linear independence still holds. 
Therefore, each $\f{v} \in  \bigcup_{p \in \{i_1,i_2,\dots,i_s\}} \Su(p,1)$ can have $\langle \f{B^{(w)}}[p], \f{v} \rangle = 0$ for at most $1$ of the $w$ queries. So by pigeonhole principle, at least one of the query vectors $\f{B^{(w)}}[p]$ will have $\langle \f{B^{(w)}}[p], \f{v} \rangle \neq 0$ for all $\f{v} \in  \bigcup_{p \in \{i_1,i_2,\dots,i_s\}} \Su(p,1)$. Hence, $|\bigcup_{p \in \{i_1,i_2,\dots,i_s\}} \Su(p,1)| \le \max_w \{ \s{nzcount}(\f{B^{(w)}}[p]) \}$. 

\end{proof}

\subsection{Computing $|\Su((i),1)|$}\label{sec:Si}
In this section, we show how to compute $|\Su(i, 1)|$ for every index $i \in [n]$. 

Let $\ca{F} =  \{\ca{H}_1, \ca{H}_2, \dots, \ca{H}_n \}$ be a $(d,\ell k, 0.5)$-$\s{RUFF}$ of size $n$ over alphabet $[m]$.   
Construct the binary matrix $\f{A} \in \{0,1\}^{m \times n}$ from $\ca{F}$, as $\f{A}_{i,j} = 1$ if and only if $i \in \ca{H}_j$. 
Each column $j \in [n]$ of $\f{A}$ is essentially the indicator vector of the set $\ca{H}_j$. 

We use the rows of matrix $\f{A}$ as query vectors to compute $|\Su((i), 1)|$ for each $i \in [n]$. For each such query vector $\f{x}$, we compute the $\s{nzcount}(\f{x})$ using Algorithm~\ref{algo:1} with batchsize $T_C$ for \MLC, and Algorithm~\ref{algo:2} with batchsize $T_R$ for \MLR.  
We choose $T_C$ and $T_R$ to be sufficiently large to ensure that $\s{nzcount}$ is correct for all the queries with very high probability.

For every $h \in \{0, \ldots, \ell \}$, let $\f{b}^h \in \{0,1\}^m$ be the indicator of the queries that have $\s{nzcount}$ at least $h$. We show in Lemma~\ref{lem:ruff} that the set of columns of $\f{A}$ that have large intersection with $\f{b}^h$, exactly correspond to the indices $i \in [n]$ that satisfy $|\Su((i),1)| \ge h$. This allows us to recover $|\Su((i),1)|$ exactly for each $i \in [n]$. 
\begin{algorithm}[h!]
\caption{\textsc{Compute--}$|\Su((i),1)|$ \label{algo:Si}}
\begin{algorithmic}[1]
\REQUIRE Construct binary matrix $\f{A} \in \{ 0,1\}^{m \times n}$ from 
$(d,\ell k,0.5)- \s{RUFF}$ of size $n$ over alphabet $[m]$, with $m=c_1\ell^2k^2\log n$ and $d=c_2\ell k\log n$. 
\STATE Initialize $\f{b}^0, \f{b}^1,\f{b}^2,\dots,\f{b}^\ell$ to all zero vectors of dimension $m$.  
\STATE Let batchsize $T_C = 4 \ell^2 \log mn / (1-2\eta)^2$ for \MLC, and $T_R = 4\cdot (36\pi)\cdot \ell^2 \log mn$ for \MLR. 
\FOR{$i=1,\dots,m$}
\STATE Set $w := \s{nzcount}(\f{A}[i])$\\ (obtained using Algorithm~\ref{algo:1} with batchsize $T_C$ for \MLC, and Algorithm~\ref{algo:2} with batchsize $T_R$ for \MLR.)
\FOR{$h=0,1,\dots,w$}
\STATE Set $\f{b}^h_i=1$.
\ENDFOR
\ENDFOR
\FOR{$h=0,1,\dots,\ell$}
\STATE Set $\ca{C}_h=\{i \in [n] \mid |\s{supp}(\f{b}^h)\cap \s{supp}(\f{A}_i)|\ge 0.5 d \}$.
\ENDFOR
\FOR{$i=1,2,\dots,n$}
\STATE Set $|\Su((i),1)|=h$ if $i \in \{\ca{C}_{h}\setminus \ca{C}_{h+1}\}$ for some $h \in \{0,1,\dots, \ell-1\}$.
\STATE Set $|\Su((i),1)|=\ell$ if $i \in \ca{C}_{\ell}$
\ENDFOR
\end{algorithmic}
\end{algorithm}

\begin{proof}[Proof of Lemma~\ref{lem:ruff}]
Since $\f{A}$ has $m = O(\ell^2k^2\log n)$ distinct rows, and each row is queried $T_{C} = O( \ell^2 \log(mn)/ (1-2\eta)^2)$ times for MLC     and $T_R = O( \ell^2 \log(mn))$ times for \MLR, the total query complexity of Algorithm~\ref{algo:Si} is $O(\ell^4 k^2 \log(\ell k n) \log n /(1-2\eta)^2 )$ for \MLC, and $O(\ell^4 k^2 \log(\ell k n) \log n )$ for \MLR. 

To prove the correctness, we first see that the $\s{nzcount}$ for each query is estimated correctly using Algorithm~\ref{algo:1} with overwhelmingly high probability. 
From Lemma \ref{lem:batchsize} with $T_{C} = 4\ell^2 \log(mn) / (1-2\eta)^2$, it follows that each $\s{nzcount}$ is estimated correctly with probability at least $1 - \frac{1}{mn^2}$. 
Therefore, by taking a union bound over all rows of $\f{A}$, we estimate all the counts accurately with probability at least $1-\frac{1}{n^2}$ for \MLC. The bounds follow similarly for \MLR from Lemma~\ref{lem:batchsize2} with $T_R = 4\cdot (36\pi)\cdot \ell^2 \log mn$.

We now show, using the properties of $\s{RUFF}$, that $|\s{supp}(\f{b}^h)\cap \s{supp}(\f{A}_i)|\ge 0.5 d$ if and only if $|\Su((i),1)| \ge h$, for any $0\le h \le \ell$. 
Let $i \in [n]$ be an index such that $|\Su((i),1)| \ge h$, i.e., there exist at least $h$ unknown vectors that have a non-zero entry in their $i^{th}$ coordinate. Also, let $U := \cup_{i \in [\ell]} \s{supp}(\f{v}^{i})$ denote the union of supports of all the unknown vectors. Since each unknown vector is $k$-sparse, it follows that $| U | \le \ell k$. 
To show that $|\s{supp}(\f{b}^h)\cap \s{supp}(\f{A}_i)|\ge 0.5 d$, consider the set of rows of $\f{A}$ indexed by $W := \{\s{supp}(\f{A}_i) \setminus \cup_{j \in U \setminus \{i\} } \s{supp}(\f{A}_j)\}$. 
Since $\f{A}$ is a $(d, \ell k, 0.5) - \s{RUFF}$, we know that $|W| \ge 0.5 d$. We now show that $\f{b}^h_t = 1$ for every $t \in W$. 
This follows from the observation that for $t \in W$, and each unknown vector $\f{v} \in \Su((i),1)$, the query 
 $\langle \f{A}[t],\f{v} \rangle = \f{v}_i \neq 0$.  
Since $|\Su((i),1)| \ge h$, we conclude that $\s{nzcount}(\f{A}[t]) \ge h$, and therefore, $\f{b}^h_t = 1$. 

To prove the converse, consider an index $i \in [n]$ such that $|\Su((i),1)| < h$. Using a similar argument as above, we now show that $|\s{supp}(\f{b}^h)\cap \s{supp}(\f{A}_i)| < 0.5 d$.  Consider the set of rows of $\f{A}$ indexed by $W := \{\s{supp}(\f{A}_i) \setminus \cup_{j \in U \setminus \{i\} } \s{supp}(\f{A}_j)\}$. Now observe that for each $t \in W$, and any unknown vector $\f{v} \notin \Su((i),1)$, $\langle \f{A}[t],\f{v} \rangle = 0$.  Therefore $\s{nzcount}(\f{A}[t]) \le |\Su((i),1)| < h$, and $\f{b}^h_t =0$ for all $t \in W$. Since $|W| \ge 0.5 d$, it follows that $|\s{supp}(\f{b}^h)\cap \s{supp}(\f{A}_i)| < 0.5 d$. For any $0 \le h \le \ell$, Algorithm~\ref{algo:Si}. therefore correctly identifies the set of indices $i \in [n]$ such that $|\Su((i),1)| \ge h$. In particular, the set $C_h: = \{i \in [n] \mid |\Su((i),1)| \ge h\}$. 
Therefore, the set $\ca{C}_{h} \setminus \ca{C}_{h+1}$ is exactly the set of indices $i \in [n]$ such that $|\Su((i),1)| = h$. 
\end{proof}




\subsection{Estimating $\s{nzcount}$}\label{sec:nzcount}
The main subroutine used to compute both $|\Su((i), 1)|$ and $|\cup_j \Su((j), 1)|$ is to estimate $\s{nzcount}(\f x)$ - the number of unknown vectors that have a non-zero inner product with $\f x \in \R^n$.  We now provide algorithms to estimate $\s{nzcount}(\f x)$ using very few queries in both the models considered in this work. 

\subsubsection{Estimating $\s{nzcount}$ for Mixture of Linear Classifiers}\label{sec:mlc-nzcount}

Algorithm~\ref{algo:1} empirically estimates $\s{nzcount}$ by repeatedly querying with the same vectors $\f x$ and its negation $-\f{x}$. Let $T$ denote the number of times a fixed query vector $\f x$ is repeatedly queried. We refer to this quantity as the \emph{batchsize}. We now show that Algorithm~\ref{algo:1} estimates $\s{nzcount}$ with overwhelmingly high probability.

\begin{algorithm}[h!]
\caption{\textsc{\textsc{Query}}$(\f{x},T)$\label{algo:1}}
\begin{algorithmic}[1]
\REQUIRE Query access to $\ca{O}$.
\FOR{$i=1,2,\dots,T$}
\STATE Query with vector $\f{x}$ and obtain response $y^{i} \in \{-1, +1\}$.
\STATE Query with vector $-\f{x}$ and obtain response $z^{i} \in \{-1, +1\}$.
\ENDFOR
\STATE Let $\hat{\s{z}}:=   \s{round}\Big(\frac{\ell\sum_{i=1}^{T}y_i+z_i}{2T(1-2\eta)}\Big)$.
\STATE Return $\hat{\s{nz}} = \ell - \hat{\s{z}}$.
\end{algorithmic}
\end{algorithm}

\begin{proof}[Proof of Lemma~\ref{lem:batchsize}]
Let us define the quantity $\s{zcount}(\f x)$ to denote the number of unknown vectors that have a zero inner product with $\f{x}$. Note it is sufficient to estimate this quantity accurately since $\s{nzcount}(\f x) = \ell - \s{zcount}(\f x)$ can be inferred directly from it. 
The algorithm is based on the following observation that for any fixed query vector $\f{x}$,
\begin{align*}
&\underset{\f{v} \sim_U \ca{V}}{\bb{E}}[\ca{O}(\f{x})]\\
    &=\Big(\underset{\f{v} \sim_U \ca{V}}{\bb{E}}[\mathds{1}[\langle \f{x},\f{v} \rangle \ge 0]] - \underset{\f{v} \sim_U \ca{V}}{\bb{E}}[\mathds{1}[\langle \f{x},\f{v} \rangle < 0]] \Big)(1-2\eta)\\
    &= \Big(\frac{1}{\ell} \cdot \sum_{i=1}^{\ell} \mathds{1}[\langle \f{x},\f{v}^i \rangle \ge 0]-\frac{1}{\ell} \cdot \sum_{i=1}^{\ell} \mathds{1}[\langle \f{x},\f{v}^i \rangle < 0]\Big)(1-2\eta).
\end{align*}
Note that since 
\begin{align*}
    &\mathds{1}[\langle \f{x},\f{v}^i \rangle \ge 0]-\mathds{1}[\langle \f{x},\f{v}^i \rangle < 0] \\
    &= \mathds{1}[\langle \f{x},-\f{v}^i \rangle \ge 0]-\mathds{1}[\langle \f{x},-\f{v}^i \rangle < 0] \quad \text{if} \quad \langle \f{x},\f{v}^i \rangle=0
\end{align*}
and 
\begin{align*}
&\mathds{1}[\langle \f{x},\f{v}^i \rangle \ge 0]-\mathds{1}[\langle \f{x},\f{v}^i \rangle < 0]\\
&= \mathds{1}[\langle \f{x},-\f{v}^i \rangle < 0]-\mathds{1}[\langle \f{x},-\f{v}^i \rangle \ge 0] \quad \text{if} \quad \langle \f{x},\f{v}^i \rangle \neq 0.
\end{align*}
Therefore, we must have 
\begin{align*}
    \frac{\bb{E}_{\f{v} \sim_U \ca{V}}[\ca{O}(\f{x})+\ca{O}(-\f{x})]}{2(1-2\eta)} 
    & = \frac{1}{\ell} \cdot \sum_{i=1}^{\ell} \mathds{1}[\langle \f{x},\f{v}^i \rangle = 0] \\
    &=\frac{1}{\ell} \cdot \s{zcount}(\f x)
\end{align*}

The algorithm therefore empirically estimates $\s{zcount}(\f x)$ using repeated queries with vectors $\f x$ and $-\f x$.
Let us denote the the $T$ responses from $\ca{O}$ by $y_1,y_2,\dots,y_T$ and $z_1,z_2,\dots,z_T$ corresponding to the query vectors $\f{x}$ and $-\f{x}$ respectively.

From the observations stated above, it then follows that the quantity $U= \frac{\ell}{ (1-2\eta)}\frac{\sum_i y_i+z_i}{2T}$ is an unbiased estimate for $\s{zcount}(\f x)$, i.e. 
$\bb{E}U = \s{zcount}(\f x)$. Algorithm~\ref{algo:1} therefore makes a mistake in estimating $ \s{zcount}(\f x)$ (i.e., $\hat{\s{z}} \neq \s{zcount}(\f x)$) only if 
\begin{align*}
|U- \bb{E}U| \ge \frac{1-2\eta}{2\ell}.
\end{align*}

Since the responses to the queries are independent, using Chernoff bounds  \cite{boucheron2013concentration} it then follows that the algorithm makes an erroneous estimate of $\s{zcount}(\f x)$ with very low probability.
\begin{align*}
\Pr \Big(|U - \bb{E}U| \ge \frac{1-2\eta}{2\ell} \Big) \le 2e^{-\frac{T(1-2\eta)^{2}}{2\ell^2}}. 
\end{align*}

\end{proof}

\subsubsection{Estimating $\s{nzcount}$ for Mixed Linear Regressions}\label{sec:mlr-nzcount}


We restrict our attention to only binary queries in this section which is sufficient for support recovery. Algorithm~\ref{algo:2} queries repeatedly with a carefully crafted transformation $\f{\Tr_\gamma}(\f{x})$ of the input vector $\f{x}$, and counts the number of responses that lie within a fixed range $[-a, a]$. This estimates  count the number of unknown vectors that have a zero inner product with $\f{x}$, and thereby estimates $\s{nzcount}(\f{x})$.

For any binary vector $\f{x} \in \{0,1\}^n$, define as follows:
$\f{\Tr_\gamma}:\{0,1\}^n \rightarrow \bb{R}^n$ 
\begin{align*}
    \f{\Tr_\gamma}(\f{x})_i = 
    \begin{cases}
     0 \; \text{if } \; \f{x}_i = 0 \\
     \ca{N}(0,\gamma^2) \; \text{if } \; \f{x}_i \neq 0.
    \end{cases}
\end{align*}

For any $a, \sigma \in \bb{R}$, let us also define 
\begin{align*}
    \phi_1(a, \sigma) &:= \Pr_{W \sim \ca{N}(0,\sigma^2)}(W \in [-a,a]) \quad \text{and}\\ 
    \quad \phi_2(a, \sigma, \gamma) &:= \Pr_{W \sim \ca{N}(0,\sigma^2+\gamma^2)}(W \in [-a,a]). 
\end{align*}

From standard Gaussian concentration bounds, we know that 
\begin{eqnarray}\label{eq:gaussian}
    &\phi_1(a,\sigma) = \s{erf}\Big(\frac{a}{\sqrt{2}\sigma}\Big) \ge \frac{\sqrt{2}}{\sqrt{\pi}}\Big(\frac{a}{\sigma}-\frac{a^3}{6\sigma^3}\Big). \\
    &\phi_2(a,\sigma,\gamma) = \s{erf}\Big(\frac{a}{\sqrt{2(\sigma^2+\gamma^2)}}\Big) \le a\sqrt{\frac{2}{\pi(\sigma^2+\gamma^2)}}.
\end{eqnarray}

\begin{algorithm}[h!]
\caption{\textsc{\textsc{Query}}$(\f{x} \in \{0,1\}^{n},T,a,\gamma)$\label{algo:2}}
\begin{algorithmic}[1]
\REQUIRE Query access to $\ca{O}$ and known $\sigma,\ell$.
\FOR{$i=1,2,\dots,T$}
\STATE Query with vector $\f{\Tr_\gamma}(\f{x})$ and obtain response $y_{i} \in \bb{R}$.
\ENDFOR
\STATE Let $\hat{\s{z}} = \s{round}\Big(\frac{\ell \sum_{i=1}^T \mathds{1}\left[y_i \in [-a,a]\right]}{T\phi_1(a,\sigma)}\Big).$

\STATE Return $\hat{\s{nz}}=\ell-\hat{\s{z}}(\f{x})$.
\end{algorithmic}
\end{algorithm}

\begin{proof}[Proof of Lemma~\ref{lem:batchsize2}]

Similar to the proof of Lemma~\ref{lem:batchsize} define $\s{zcount}(\f x)$ denote the number of unknown vectors that have a zero inner product with $\f{x}$. We show that Algorithm~\ref{algo:2} estimates this quantity accurately, and hence $\s{nzcount}(\f x) = \ell - \s{zcount}(\f x)$ can be inferred from it.


For the set of $T$ responses $y_1, \ldots, y_T$ obtained from $\ca{O}$, define $U :=\frac{\sum_i \mathds{1}\left[y^i \in [-a,a]\right]}{T}$. Then,  
\begin{align}\label{eq:1}
    \underset{{\cal V}, \f{\Tr_\gamma}, Z}{\bb{E}}[U] = \underset{{\cal V}, \f{\Tr_\gamma}, Z}{\Pr}\Big(\langle \f{\Tr_\gamma}(\f{x}),\f{v} \rangle +Z \in [-a,a]\Big).
\end{align}


Note that for any $a \in \bb{R}$ and $\f{x} \in \{0,1\}^n$, we have 
\begin{align*}
    &\underset{{\cal V}, \f{\Tr_\gamma}, Z}{\Pr}\Big(\langle \f{\Tr_\gamma(\f{x})},\f{v} \rangle +Z \in [-a,a]\Big) \\
    &= \frac{1}{\ell}\Bigg(\sum_{i:\langle \f{x}, \f{v}^i \rangle =0} \underset{ \f{\Tr_\gamma}, Z}{\Pr}\Big(\langle \f{\Tr_\gamma(\f{x})},\f{v}^i \rangle +Z \in [-a,a]\Big) \\
    &+\sum_{i:\langle \f{x}, \f{v}^i \rangle \neq 0} \underset{\f{\Tr_\gamma}, Z}{\Pr}\Big(\langle \f{\Tr_\gamma(\f{x})},\f{v}^i \rangle +Z \in [-a,a]\Big)\Bigg)
\end{align*}

Observe that if $\langle \f{x},\f{v}^i \rangle =0$, then $\langle \f{\Tr_\gamma(\f{x})}, \f{v}^i \rangle+Z \sim \ca{N}(0,\sigma^2)$, and if $\langle \f{x},\f{v}^i \rangle \neq 0$, then 
$\langle \f{\Tr_\gamma(\f{x})},\f{v}^i \rangle \sim \ca{N}(0,\gamma^2 \left|\left| \f{x} \odot \f{v}^i\right|\right|_2^2+\sigma^2)$, where $\f{u} \odot \f{v}$ denotes the entry-wise product of $\f{u},\f{v}$. 
It then follows that
\begin{align}\label{eq:2}
     &\frac{\s{zcount}(\f{x})}{\ell} \cdot \phi_1(a, \sigma) \le \underset{{\cal V}, \f{\Tr_\gamma}, Z}{\Pr}\Big(\langle \f{\Tr_\gamma}(\f{x}),\f{v} \rangle +Z \in [-a,a]\Big) \nonumber \\ 
     &\le  \frac{\s{zcount}(\f{x})}{\ell} \cdot \phi_1(a, \sigma) +\phi_2(a, \sigma, \gamma\delta). 
\end{align}

Setting the parameters $a=\sigma/2$ and $\gamma = 2\sqrt{2\ell} \sigma/\delta$, from Equation~\ref{eq:gaussian}, we get that 
\begin{align*}
    \phi_1(a,\sigma) \ge \frac{23\sqrt{2}}{48\sqrt{\pi}} \quad \text{and} \quad \phi_2(a,\sigma,\gamma\delta) \le \frac{\sqrt{2}}{4\ell \sqrt{\pi}}.
\end{align*}
and therefore, $4\ell \phi_2(a,\sigma,\gamma\delta)\le \phi_1(a,\sigma).$

Combining this observation with Equation~\ref{eq:1} and Equation~\ref{eq:2}, we then get that 
\begin{align}\label{eq:3}
     &\frac{\s{zcount}(\f{x})}{\ell} \cdot \phi_1(a, \sigma) \le \underset{{\cal V}, \f{\Tr_\gamma}, Z}{\bb{E}}[U] \nonumber \\ 
     &\le  \frac{\s{zcount}(\f{x})}{\ell} \cdot \phi_1(a, \sigma) + \frac{1}{4\ell}\cdot \phi_1(a, \sigma). 
\end{align}

From Equation~\ref{eq:3}, we observe that if $|U - \bb{E}[U]| \le \frac{1}{4\ell}\cdot \phi_1(a, \sigma)$, then $\s{zcount}(\f{x}) -\frac14 \le \frac{\ell U}{\phi_1(a, \sigma)} \le \s{zcount}(\f{x}) + \frac12$. Since $\s{zcount}(\f{x})$ is integral, it follows that if $|U - \bb{E}[U]| \le \frac{1}{4\ell}\cdot \phi_1(a, \sigma)$, the estimate $\s{\hat{z}} = \s{round}\big(\frac{\ell U}{\phi_1(a, \sigma)}\big)$ computed in Algorithm~\ref{algo:2} will correctly estimate  $\s{zcount}(\f{x})$.

The correctness of the algorithm then follows from Chernoff bound\cite{boucheron2013concentration} 
\begin{align*}
    \Pr\Big(\left|U-\bb{E}U\right|\ge \frac{\phi_1(a,\sigma)}{4\ell}\Big) 
    &\le 2\exp\Big(-\frac{T\phi_1(a,\sigma)^2}{8\ell^2}\Big) \\
    &\le 2\exp\Big(-\frac{T}{36\pi\ell^2}\Big).
\end{align*}
Moreover, From the definition of $\s{SNR}$, and the fact that $\bb{E}Z^2 = \sigma^2$, we have 
\begin{align*}
 \s{SNR} &\le \frac{1}{\sigma^2} \cdot \max_{\f{x} \in \{0,1\}^n}\max_{i \in [\ell]}\bb{E}\langle \f{\Tr_\gamma(x)}, \f{v}^i \rangle^2  \\
 &\le \frac{1}{\sigma^2} \cdot  \gamma^2 \max_{i \in [\ell]} \left|\left|\f{v}^i\right|\right|_2^2 \\
 &= O(\ell^2\max_{i \in [\ell]}\left|\left|\f{v}^i\right|\right|_2^2/\delta^2) ~\text{for $\gamma= 2\sqrt{2} \ell \sigma/\delta$}.
\end{align*}
\end{proof}

\paragraph{Acknowledgements.} This work is supported in part by NSF awards 2133484, 2127929, and 1934846.

\clearpage

\bibliographystyle{abbrv}

\clearpage

\appendix





\section{Jennrich's Algorithm for Unique Canonical Polyadic (CP) Decomposition}\label{sec:jenn}

In this section, we state Jennrich's Algorithm for CP decomposition (see Sec 3.3, \cite{moitra2014algorithmic}) that we use in this paper. Recall that we are provided a symmetric tensor $\ca{A}$ of order $3$ and rank $R$ as input i.e. a tensor $\ca{A}$ that can be expressed in the form below:
\begin{align*}
    \ca{A}=\sum_{r=1}^{R}\underbrace{\f{z}^r\otimes \f{z}^r \otimes  \f{z}^r}.
\end{align*}
Our goal is to uniquely recover the latent vectors $\f{z}^1,\f{z}^2,\dots,\f{z}^R$ from the input tensor $\ca{A}$ provided that the vectors  $\f{z}^1,\f{z}^2,\dots,\f{z}^R$ are linearly independent.  Let $\ca{A}_{\cdot,\cdot,i}$ denote the $i^{\s{th}}$ matrix slice through $\ca{A}$.

\begin{algorithm}[h!]
\caption{\textsc{\textsc{Jennrich's Algorithm}}$(\ca{A})$\label{algo:tensor}}
\begin{algorithmic}[1]
\REQUIRE A symmetric rank-$ R$ tensor $\ca{A} \in \bb{R}^n \otimes \bb{R}^n  \otimes \bb{R}^n$ of order $3$. 

\STATE Choose $\f{a},\f{b}\in \bb{R}^n$ uniformly at random such that it satisfies $\left|\left|\f{a}\right|\right|_2=\left|\left|\f{b}\right|\right|_2=1$.

\STATE Compute  $\f{T}^{(1)} \triangleq \sum_{i \in [n]}\f{a}_i\ca{A}_{\cdot,\cdot,i},\f{T}^{(2)} \triangleq \sum_{i \in [n]}\f{b}_i\ca{A}_{\cdot,\cdot,i}$.

\IF{$\s{rank}(T^{1})<R$}
\STATE Return Error
\ENDIF

\STATE Solve the general eigen-value problem $\f{T}^{(1)}\f{v}=\lambda_v \f{T}^{(2)}\f{v}$.

\STATE Return the  eigen-vectors $\f{v}$ corresponding to the non-zero eigen-values.

\end{algorithmic}
\end{algorithm}

For the sake of completeness, we describe in brief why Algorithm \ref{algo:tensor} works. Note that $\sum_{i \in [n]}\f{a}_i\ca{A}_{\cdot,\cdot,i}$ is the weighted sum of matrix slices through $\ca{A}$ each weighted by $\f{a}_i$. Therefore, it is easy to see that 
\begin{align*}
    &\f{T}^{(1)} \triangleq \sum_{i \in [n]}\f{a}_i\ca{A}_{\cdot,\cdot,i}= \sum_{r=1}^{R}\langle \f{z}^r,\f{a}\rangle \f{z}^r \otimes \f{z}^r = \f{Z}\f{D}^{(1)}\f{Z}^T \\
    &\f{T}^{(2)} \triangleq \sum_{i \in [n]}\f{b}_i\ca{A}_{\cdot,\cdot,i}= \sum_{r=1}^{R}\langle \f{z}^r,\f{b}\rangle \f{z}^r \otimes \f{z}^r= \f{Z}\f{D}^{(2)}\f{Z}^T
\end{align*}
where $\f{Z}$ is a $n \times R$ matrix whose columns form the vectors $\f{z}^1,\f{z}^2,\dots,\f{z}^R$; $\f{D}^{(1)},\f{D}^{(2)}$ are $R \times R$ diagonal matrices whose entry at the $i^{\s{th}}$ position in the diagonal is $\langle \f{z}^r,\f{a}\rangle$ and $\langle \f{z}^r,\f{b}\rangle$ respectively. Clearly, the matrices $\f{T}^{(1)},\f{T}^{(2)}$ are of rank $R$ if and only if the vectors  $\f{z}^1,\f{z}^2,\dots,\f{z}^R$ are linearly independent and therefore, this condition is easy to verify in Steps 3-5. Now if the sufficiency condition is met, then the generalized eigenvalue decomposition will reveal the unknown latent vectors since the eigenvalues are going to be distinct with probability $1$.

\section{Proof of Concept Simulations}

We set $\ell=3$ i.e. we have 3 unknown vectors of dimension $500$. For each of the first two vectors, we design them by randomly choosing 5 indices to be their support along with the constraint that their supports intersect on exactly 2 indices. We choose the third vector so that its support is the union of the supports of the other two unknown vectors. Note that with such a choice of the unknown vectors, the separability assumption in \cite{gandikota2020recovery}  (the support of any unknown vector is not contained within the union of support of the other unknown vectors) no longer holds true. Let $T$ be the number of times each distinct query is repeated to estimate the $\mathsf{nzcount}(\cdot)$ of the query. For each value of $T \in \{5,10,15,20,25,30,35,40,45,50\}$, we simulate our algorithms $100$ times and compute the fraction of times (let’s call this accuracy) the support of the unknown vectors are recovered exactly. In order to recover the support, we run Algorithm \ref{algo:t-iden-supp-rec} with $p=2$ and Algorithm \ref{algo:s-lin-indep-supp-rec} (with $A^{\mathcal{F}}$ just being $\mathcal{A}$) when the support of the unknown vectors are known to be full-rank (hence we can apply Jennrich’s algorithm (Algorithm \ref{algo:tensor}) directly). We can also think of Algorithm \ref{algo:tensor} as a special case of Algorithm \ref{algo:s-indep-supp-rec} for $w=3$ (see Remark \ref{rmk:comp}).
Note that in Algorithm \ref{algo:tensor}, the eigenvectors obtained are not exactly sparse (due to precision issues while solving the generalized eigen-value problem) and has extremely small non-zero values corresponding to the zero entries of the unknown vectors. This can be easily resolved by using a post-processing step on the recovered eigenvectors where we retain only those entries in the support with an absolute value more than $0.002$. Similarly, the zero eigenvalues in Algorithm \ref{algo:tensor} turn out to be small non-zero values in simulation; again, this can be resolved by taking the eigenvectors corresponding to the top $8$ non-zero eigenvalues, modify the corresponding eigenvectors by the aforementioned post-processing step and return the distinct support vectors obtained. In this experiment, the union-free families are simulated by just obtaining a random design which works with high probability. We obtain the following result (here T can be a proxy for the total number of measurements, as the later grows linearly with T):

\begin{center}
\begin{tabular}{||c | c | c||} 
 \hline
 T & Algorithm \ref{algo:t-iden-supp-rec}(Accuracy) & Algorithm \ref{algo:tensor}(Accuracy) \\ [0.5ex] 
 \hline\hline
 5 & 0.04 & 0.0 \\ 
 \hline
 10 & 0.2 & 0.14 \\
 \hline
 15 & 0.33 & 0.19 \\
 \hline
 20 & 0.48 & 0.5 \\
 \hline
 25 & 0.45 & 0.62 \\
 \hline
 30 & 0.72 & 0.8 \\
 \hline
 35 & 0.86 & 0.9 \\
 \hline
 40 & 0.87 & 0.96 \\
 \hline
 45 & 0.89 & 0.99 \\
 \hline
 50 & 0.84 & 0.99 \\  
 \hline
\end{tabular}
\end{center}

It is evident that for both algorithms implemented, the accuracy increases with the number of times a particular vector is repeatedly queried. Comparing the performance of the two algorithms, Jennrich’s algorithm performance improves much faster than Algorithm 1 with the increase of queries. 

\end{document}